\documentclass[11pt]{article}
\usepackage{fullpage, url, graphicx, subfigure}
\usepackage{amsmath, amsthm, amssymb}
\usepackage{fancyhdr}
\usepackage{mathrsfs}
\usepackage{epsfig}
\usepackage{graphics}
\usepackage{epsf}
\usepackage{amsmath, amsthm, amssymb, multirow, paralist, mathtools}
\usepackage{fullpage}
\usepackage{url}
\usepackage{algorithm,algorithmic}

\title{Accelerating Minibatch Stochastic Gradient Descent using Stratified Sampling}

\date{}

\def \g  {\mathbf{g}}

\def \itj {i_t^j}

\def \u {\mathbf{u}}

\def \v {\mathbf{v}}
\def \w  {\mathbf{w}}

\def \x {\mathbf{x}}

\def \B {\mathcal{B}}
\def \C {\mathcal{C}}

\def \E {\mathbb{E}}

\def \R {\mathbb{R}}

\def \V {\mathbb{V}}

\def \bq  {\begin{eqnarray}}
\def \eq  {\end{eqnarray}}
\def \bqs {\begin{eqnarray*}}
\def \eqs {\end{eqnarray*}}
%\nipsfinalcopy % Uncomment for camera-ready version

\newtheorem{property}{Property}
\newtheorem{definition}{Definition}
\newtheorem{lemma}{Lemma}
\newtheorem{thm}{Theorem}

\begin{document}
\author{Peilin Zhao \\
       Department of Statistics\\
       Rutgers University\\
       Piscataway, NJ, 08854, USA\\
       \emph{peilin.zhao@rutgers.edu}
       \and
       Tong Zhang  \\
       Baidu Inc. \& Rutgers University\\
       Piscataway, NJ, 08854, USA\\
       \emph{tzhang@stat.rutgers.edu}
}
\maketitle

\begin{abstract}
Stochastic Gradient Descent (SGD) is a popular optimization method which has been applied to many important machine learning tasks such as Support Vector Machines and Deep Neural Networks. In order to parallelize SGD, minibatch training is often employed.
The standard approach is to uniformly sample a minibatch at each step, which often leads to high variance.
In this paper we propose  a stratified sampling strategy, which divides the whole dataset into clusters with low within-cluster  variance; we then take examples from these clusters using a stratified sampling technique. It is shown that the convergence rate can be significantly improved by the algorithm. Encouraging experimental results confirm the effectiveness of the proposed method.
\end{abstract}

\section{Introduction}
Stochastic Gradient Descent has been extensively studied in the machine learning community~\cite{DBLP:conf/icml/Zhang04,duchi2009efficient,DBLP:journals/mp/Shalev-ShwartzSSC11,rakhlin2011making,DBLP:conf/nips/MahdaviYJZY12,shamir2013stochastic,johnson2013accelerating}. At every step, a typical stochastic gradient descent method will sample one training example uniformly at random from the training data, and then update the model parameter using the sampled example.
In order to parallelize SGD, the standard approach is to employ minibatch training, which samples multiple examples uniformly at each step.
The uniformly sampled minibatch stochastic gradient is an unbiased estimation of the true gradient~\cite{DBLP:conf/icml/Zhang04,rakhlin2011making,shamir2013stochastic,duchi2009efficient},
but the resulting estimator may have relatively high variance.
This will negatively affect the convergence rate of the underlying
optimization procedure. Instead of using uniform sampling of the training data, we propose to divide the whole dataset into clusters and perform stratified sampling that minimizes an upperbound of the variance.
We show that the proposed algorithm can significantly reduce the stochastic variance, which will then improve convergence.

The key idea in the proposed approach is to perform stratified sampling  and construct the corresponding unbiased stochastic gradient estimators that minimize an upperbound of the stochastic variance. To this end, we analyze the relationship between the variance of stochastic gradient estimator and the sampling strategy. We show that to minimize the variance, the optimal sampling strategy should roughly minimize a sum  of all the weighted standard deviations of gradients corresponding to the subdatasets. Our theoretical analysis shows that under certain conditions, the proposed sampling method can significantly improve the convergence rate. This result is empirically verified  by experiments.

The rest of this paper is organized as follows. Section~\ref{sec:related} reviews the related work. In section~\ref{sec:algorithm}, we study minibatch stochastic gradient descent with stratified sampling. The empirical evaluations are presented in Section~\ref{sec:experiment}.
Section~\ref{sec:conclusion} concludes the paper.

\section{Related Work}
\label{sec:related}
Stochastic Gradient Descent has been extensively studied in the traditional stochastic approximation literature~\cite{kushner2003stochastic}; however the results are often asymptotic. In recent years, finite sample convergence rate of SGD for solving linear prediction problems have been
studied by a number of authors~\cite{DBLP:conf/icml/Zhang04,DBLP:conf/icml/Shalev-ShwartzSS07}.
In general SGD can achieve a convergence rate of $O(1/\sqrt{T})$ for convex loss functions, and a convergence rate of $O(\log T/T)$ for strongly convex loss functions, where $T$ is the number of iterations of the algorithm.
More recently, researchers have improved the previous bound for
the strongly convex loss function case to $O(1/T)$ by using
$\alpha$-Suffix Averaging ~\cite{rakhlin2011making}, which means that
the algorithm will return the average of the last $\alpha$ fraction of
the previously obtained sequence of predictors. A similar result can be obtained via a polynomial decay averaging strategy~\cite{shamir2013stochastic}.

Although SGD has been extensively studied, most of the existing work only considered the uniform sampling scheme during the entire learning process, which will result in an unbiased estimator with high variance.
To explicitly reduce the variance, some new stochastic gradient algorithms have been proposed~\cite{johnson2013accelerating,DBLP:conf/nips/WangCSX13,DBLP:journals/corr/ZhaoZ14}.
In~\cite{johnson2013accelerating}, the authors constructed an unbiased stochastic estimator of the full gradient called SVRG, with the property that the resulting variance will approach zero asymptotically in the finite training example case. In~\cite{DBLP:conf/nips/WangCSX13}, the authors developed a variance reduction approach with control variates  formed using low-order moments. However they still employ uniform sampling during the training process. 

Instead of using uniform sampling, stochastic gradient descent with importance sampling was studied in~\cite{DBLP:journals/corr/ZhaoZ14}, where a nonuniform sampling distribution is constructed to reduce the variance of the stochastic gradient estimator.
This paper considers a different variance reduction method using
stratified sampling for minibatch SGD training. This idea is
complementary to previously proposed variance reduction methods  such as SVRG in \cite{johnson2013accelerating} and importance sampling in \cite{DBLP:journals/corr/ZhaoZ14}. In fact, these methods can be combined.

\section{Minibatch SGD with Stratified Sampling}
\label{sec:algorithm}
\subsection{Preliminaries}
\vspace{-0.1in}
\label{sec:prelim}
We briefly introduce some key definitions and a property of convex functions that are useful throughout the paper (for details, please refer to~\cite{borwein2006convex} ).
\begin{definition}
A function $\phi: \R^d\rightarrow\R$  is $H$-strongly convex, if for all $\u,\v \in \R^d$, we have
\begin{eqnarray*}
\phi(\u) \ge \phi(\v) + \nabla\phi(\v)^\top(\u-\v) + \frac{H}{2}\|\u-\v\|^2 ,
\end{eqnarray*}
where $\|\cdot\|$ is a norm.
\end{definition}
\begin{definition}
A function $\phi: \R^d\rightarrow\R$ is $L$-Lipschitz, if for all $\u,\v \in\R^d$, we have
\begin{eqnarray*}
|\phi(\u)-\phi(\v)|\le L \|\u-\v\|.
\end{eqnarray*}
\end{definition}
\begin{definition}
A function $\phi: \R^d\rightarrow\R$ is $(1/\gamma)$-smooth if it is differentiable and its gradient is $(1/\gamma)$-Lipschitz, or, equivalently for all $\u,\v\in\R^d$, we have
\begin{eqnarray*}
\phi(\u) \le \phi(\v) + \nabla\phi(\v)^\top(\u-\v) + \frac{1}{2\gamma}\|\u-\v\|^2 .
\end{eqnarray*}
\end{definition}
\begin{property}
If a function $\phi:\R^d\rightarrow \R$ is convex ($0$-strongly convex) and $(1/\gamma)$-smooth, then for all $\u,\v\in\R^d$, we have
\bqs
\langle\nabla \phi(\u)-\nabla \phi(\v), \u-\v\rangle \ge \gamma \|\nabla \phi(\u)-\nabla \phi(\v)\|^2,
\eqs
which is known as co-coercivity of $\nabla\phi$ with parameter $\gamma$.
\end{property}

Throughout this paper, we will denote $\|\cdot\|_2$ as $\|\cdot\|$ for simplicity.

\subsection{Problem Setting}
In this paper, we will focus on the standard multiclass classification task, for which a set of examples is given as $\{(\x_1,y_1),(\x_2,y_2),\ldots,(\x_n,y_n)\}$, where each $\x_i\in \R^d$ is a $d$-dimensional instance and $y_i\in\{1,2,\ldots,m\}$ is the class label assigned to $\x_i$. Given this set of examples, we learn a classifier $\w$ to predict the label $y$ of $\x\in \R^d$. To learn the classifier, a loss function $\ell(\w;\x,y)$ will be introduced to  penalize the deviation of the prediction of $\w$ on $\x$ from the true label $y$. In this problem setting, our goal is to find an approximate solution of the following optimization problem
\bqs
\min_{\w\in\R^d} P(\w) \qquad P(\w) :=\frac{1}{n}\sum^n_{i=1}\phi_i(\w),
\eqs
where $\phi_i(\w)= \ell(\w,\x_i,y_i) + \frac{\lambda}{2}\|\w\|^2$, and $\lambda$ is a regularization parameter.

To solve the above optimization, a standard method is gradient descent, which can be described by the following update rule for $t=1,2,\ldots$
\[
\w_{t+1}=\w_t-\eta_t\nabla P(\w_t)=\w_t- \frac{\eta_t}{n}\sum^n_{i=1}\nabla\phi_i(\w_t).
\]
However, at each step, gradient descent requires the calculation of $n$ derivatives, which is expensive when $n$ is very large. A popular modification is Stochastic Gradient Descent (SGD): at each iteration $t=1,2,\ldots,$ we draw $i_t$ uniformly randomly from $[n]:=\{1,\ldots,n\}$, and let
\[
\w_{t+1} = \w_t - \eta_t\nabla\phi_{i_t}(\w_t).
\]
Because $\E[\nabla\phi_{i_t}(\w_t)|\w_t]=\frac{1}{n}\sum^n_{i=1}\nabla\phi_{i}(\w_t)=\nabla P(\w_t)$, the expectation $\E[\w_{t+1}|\w_t]$ equals the full gradient. The advantage of stochastic gradient is that each step only relies on a single derivative $\nabla\phi_{i_t}(\w_t)$, and thus the computational cost is $1/n$ that of the standard gradient descent. However, a disadvantage of the method is that the randomness introduces variance, which is caused by the fact that $\nabla\phi_{i_t}(\w_t)$ equals the gradient $\nabla P(\w_t)$ in expectation but each $\nabla\phi_i(\w_t)$ is different. In particular, if it has a large variance, then the convergence will be slow.

For example, consider the case that each $\phi_i(\w)$ is $(1/\gamma)$-smooth, then we have
\bqs
\E P(\w_{t+1})&=&\E P\big(\w_t-\eta_t\nabla\phi_{i_t}(\w_t)\big)\le\E P(\w_t) - \eta_t\|\E \nabla P(\w_t)\|^2 + \frac{\eta_t^2}{2\gamma}\E\|\nabla\phi_{i_t}(\w_t)\|^2\\
&\le& \E P(\w_t) -\eta_t(1-\frac{\eta_t}{2\gamma})\E\|\nabla P(\w_t)\|^2 + \frac{\eta_t^2}{2\gamma}\V(\nabla\phi_{i_t}(\w_t)),
\eqs
where the variance is
\bqs
\V(\nabla\phi_{i_t}(\w_t))=\E\|\nabla\phi_{i_t}(\w_t)-\nabla P(\w_t)\|^2.
\eqs
From the above inequality, we can observe that the smaller the variance, the more reduction on the objective function we have. To reduce the variance, a typical method is to i.i.d. uniformly sample a mini-batch of indices $\B_t=\{\itj\in\{1,2,\ldots,n\}|j=1,2,\ldots,b\}$ with replacement from the set of indices, and then update the classifier as
\bqs
\w_{t+1} =\w_t-\frac{\eta_t}{b}\sum_{s\in\B_t}\nabla\phi_{s}(\w_t),
\eqs
which  equals GD update in expectation. For this update method, the reduction on the objective value can be similarly computed as:
\bqs
\E P(\w_{t+1})\le \E P(\w_t) -\eta_t(1-\frac{\eta_t}{2\gamma})\E\|\nabla P(\w_t)\|^2 + \frac{\eta_t^2}{2\gamma b}\V(\nabla\phi_{i_t^1}(\w_t))
\eqs
using the fact that every $s\in\B_t$ is i.i.d. uniformly sampled from $\{1,2,\ldots,n\}$.
Although the variance is reduced to $1/b$ of the one for using a single uniformly sampled example, the computation becomes $b$ times that of the standard SGD.
Nevertheless, minibatch training is needed to parallelize the SGD algorithm and has been commonly used in practice.
In the next subsection, we will show that the variance of minibatch SGD can be significantly reduced if we employ an appropriate stratified sampling strategy instead of uniform sampling.

\subsection{Algorithm}
\vspace{-0.1in}
The main idea is as follows. At the $t$-th step, we will use some clustering method to separate the training set of indices $[n]$ into $k$ clusters $\C_1^t,\C_2^t,\ldots,\C_k^t$, where
$\C_i^t\subset[n],\ \cup^k_{i=1}\C_i^t=[n],\ \C_j^t\cap\C_i^t=\emptyset$ $\forall i\not =j$ and $|\C_i^t|=n_i^t$
(we also assume that each cluster contains only one class label). Given these $k$ clusters, we will independently sample $k$ subsets $\B_1^t,\B_2^t,\ldots,\B_k^t$ from $\C_1^t,\C_2^t,\ldots,\C_k^t$, respectively, where each index $s\in\B_i^t$ is i.i.d. uniformly sampled from $\C_i^t$, with $|\B_i^t|=b_i^t$, $\sum b_i^t :=b$ and $\B^t=\cup^k_{i=1}\B_i^t$. Given these sampled indices, the proposed algorithm works as follows
\[
\w_{t+1}=\w_t-\frac{\eta_t}{n}\sum^k_{i=1}\frac{n_i^t}{b_i^t}\sum_{s\in\B_i^t}\nabla\phi_s(\w_t),
\]
which equals the GD update in expectation, since
\begin{align*}
\E \; \left[\frac{1}{n}\sum^k_{i=1}\frac{n_i^t}{b_i^t}\sum_{s\in\B_i^t}\nabla\phi_s(\w_t)|\w_t\right]
=&\frac{1}{n}\sum^k_{i=1}\frac{n_i^t}{b_i^t}\sum_{s\in\B_i^t}\E_t\nabla\phi_s(\w_t)
\\
=& \frac{1}{n}\sum^k_{i=1}\frac{n_i^t}{b_i^t}\sum_{s\in\B_i^t}\frac{1}{n_i^t}\sum_{s\in\C_i^t}\nabla\phi_s(\w_t)=\nabla P(\w_t).
\end{align*}

Assume the update methods for $\C_1^t,\C_2^t,\ldots,\C_k^t$ and $b_1^t,b_2^t,\ldots,b_k^t$ are provided, we can summarize the proposed method in Algorithm~\ref{alg:SGD-ss}.
\begin{algorithm}[htpb]
\caption{Stochastic Gradient Descent with Stratified Sampling (SGD-ss)} \label{alg:SGD-ss}
\begin{algorithmic}
\STATE {\bf Input}: Clusters Number $k$, Minibatch Size $b$.
\STATE {\bf Initialize}:  $\w_1=0$, $\{\C_1^1,\C_2^1,\ldots,\C_k^1\}$, $b_1^1,b_2^1,\ldots,b_k^1$, such that each $\C_i^1$ contains only one label.
\FOR{$t=1,2,\ldots,T$}
\FOR{$s=1,2,\ldots,k$}
\STATE $\B^t_s =\emptyset$;
\FOR{$r=1,2\ldots,b_s^t$}
\STATE Uniformly sample $i_r\in\C_s^t$ and $\B_s^t = \B_s^t \cup \{i_r\}$;
\ENDFOR
\ENDFOR
\STATE Update $\w_{t+1}=\w_t -  \frac{\eta_t}{n}\sum^k_{i=1}\frac{n_i^t}{b_i^t}\sum_{s\in\B_i^t}\nabla\phi_s(\w_t)$;
\STATE Update $\C_1^{t+1},\C_2^{t+1},\ldots,\C_k^{t+1}$ such that each $\C_i^{t+1}$ contains only one label;
\STATE Update $b_1^{t+1},b_2^{t+1},\ldots,b_k^{t+1}$;
\ENDFOR
\end{algorithmic}
\end{algorithm}

For the proposed algorithm, the reduction on the objective function can be similarly computed as follows:
\bqs
\E P(\w_{t+1})
&\le& \E P(\w_t) - \eta_t(1-\frac{\eta_t}{2\gamma}) \E\|\nabla P(\w_t)\|^2 + \frac{\eta_t^2}{2\gamma}V(\C_1^t,\ldots,\C_k^t,b_1^t,\ldots,b_k^t),
\eqs
where $V(\C_1^t,\ldots,\C_k^t,b_1^t,\ldots,b_k^t)=\V\Big(\frac{1}{n}\sum^k_{i=1}\frac{n_i^t}{b_i^t}\sum_{s\in\B_i^t}\nabla\phi_s(\w_t)\Big)$.

According to the above analysis, to maximize the reduction on the objective function, the optimal clusters and minibatch distribution can be obtained by solving the following optimization problem:
\bq
\label{eqn:variance-t}
\min_{\C_1^t,\ldots,\C_k^t,b_1^t,\ldots,b_k^t}V(\C_1^t,\ldots,\C_k^t,b_1^t,\ldots,b_k^t)&=&\frac{1}{n^2}\sum^k_{i=1}(\frac{n_i^t}{b_i^t})^2\sum_{s\in\B_i^t}\V(\nabla\phi_s(\w_t)) \nonumber\\
&=&\frac{1}{n^2}\sum^k_{i=1}\frac{n_i^t}{b_i^t}\sum_{s\in\C_i^t}\|\nabla\phi_s(\w_t)-\frac{1}{n_i^t}\sum_{r\in\C_i^t}\nabla\phi_r(\w_t)\|^2,
\eq
which can be considered as a dynamically weighted $k$-means problem, where the weights of gradients in the same cluster are the same and optimized with the clusters simultaneously.

Although, this sampling method can minimize the variance of the stochastic estimator, it requires the calculation of $n$ derivatives and requires running a clustering algorithm for every iteration, which is clearly impractical. To address this issue, we assume that $\partial_\w\ell(\w;\x,y)$ is  $L$-Lipschitz in $\x$ for fixed $\w$ and $y$. Now assume further that each cluster contains only one class label: $y_s=y_r$ $\forall s,r\in\C^t_i$.
Under these assumptions, we can relax the previous expression of variance as follows:
\bqs
&&\hspace{-0.3in}V(\C_1^t,\ldots,C_k^t,b_1^t,\ldots,b_k^t)\\
&&\hspace{-0.3in}=\frac{1}{n^2}\sum^k_{i=1}\frac{n_i^t}{b_i^t}\sum_{s\in\C_i^t}\|\partial_\w\ell(\w_t;\x_s,y_s)-\frac{1}{n_i^t}\sum_{r\in\C_i^t}\partial_\w\ell(\w_t;\x_r,y_r)\|^2\\
&&\hspace{-0.3in}\leq \frac{1}{n^2}\sum^k_{i=1}\frac{n_i^t}{b_i^t}\sum_{s\in\C_i^t} \|\partial_\w\ell(\w_t;\x_s,y_s)-\partial_\w\ell(\w_t;\mu_i^t,y_s)\|^2
\\
&&\hspace{-0.3in}
\le \frac{L^2}{n^2}\sum^k_{i=1}\frac{n_i^t}{b_i^t} \sum_{s\in\C_i^t}\|\x_s-\mu_i^t\|^2 ,
\eqs
where $\mu_i^t=\sum_{s\in\C_i^t}\x_s/n^t_i$. This relaxation inspires us to find an iteration-independent sampling strategy where $\C_i^t=\C_i$, $n_i^t=n_i$, and $b_i^t=b_i$, $\forall t,i$, by solving the following optimization problem,
which corresponds to an upperbound of \eqref{eqn:variance-t}
\bq
\label{eqn:predefined-stratified-sampling}
\min_{\C_i,b_i, y_s=y_r,\forall s,r\in\C_i}\frac{L^2}{n^2}\sum^k_{i=1}\frac{n_i}{b_i} \sum_{s\in\C_i}\|\x_s-\frac{1}{n_i}\sum_{r\in\C_i} \x_r\|^2 .
\eq
The solution of this optimization problem can be pre-calculated and used at every iteration. Given $\C_1,\ldots,\C_k$, it is easy to verify that the optimal $b_1,\ldots,b_k$ can be calculated as ($b_i$ is relaxed to take non-integer values): 
\bq\label{eqn:minibatch-size-lipschitz}
b_i = \frac{b n_i\sqrt{v_i}}{\sum^n_{j=1}n_j\sqrt{v_j}},\quad v_i= \frac{1}{n_i}\sum_{s\in\C_i}\|\x_s -\frac{1}{n_i}\sum_{r\in\C_i} \x_r\|^2 ,
\eq
and we can simplify \eqref{eqn:predefined-stratified-sampling} to the following optimization problem
\bq\label{eqn:predefined-stratified-sampling-C}
\min_{\C_i, y_s=y_r,\forall s,r\in\C_i}\sum^k_{i=1}n_i \sqrt{\frac{1}{n_i} \sum_{s\in\C_i}\|\x_s-\frac{1}{n_i}\sum_{r\in\C_i} \x_r\|^2} .
\eq
which can be solved by a $k$-means style alternating optimization algorithm. An even simpler method is to use the standard $k$-means algorithm separately for each class label to obtain the clusters $\{\C_i\}$.

\subsection{Analysis}
This section provides a convergence analysis of the proposed algorithm. Before presenting the results, we introduce the notation:
\[
\w^*=\min_{\w}P(\w),
\]
which implies $\w^*$ is the optimal solution. Given this notation, we  begin our analysis with a technical lemma.
%%==========================================================================
%% lemma  1
%%==========================================================================
\begin{lemma}\label{lem:progress-t}
Suppose $P(\w)$ is $H$-strongly convex and $(1/\gamma)$-smooth over $\R^d$. If $\eta_t\in (0, \gamma]$, then the proposed algorithm satisfies the following inequality for any $t\ge 1$,
\bqs
\E \Big[P(\w_{t+1})-P(\w^*)\Big]\le \frac{1}{2\eta_t}\E[\|\w_t-\w^*\|^2 - \|\w_{t+1}-\w^*\|^2]-\frac{H}{2}\E\|\w_t-\w^*\|^2 + \eta_t \E V_t,
\eqs
where
\[
V_t = \V\Big(\frac{1}{n}\sum^k_{i=1}\frac{n_i^t}{b_i^t}\sum_{s\in\B_i^t}\nabla\phi_s(\w_t)\Big).
\]
\end{lemma}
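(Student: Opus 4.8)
The plan is to combine the standard ``distance-to-optimum'' recursion for the iterates with the strong convexity and the smoothness of $P$, tuning the constants so that the step-size restriction $\eta_t\le\gamma$ makes every leftover term either vanish or collapse into the stated $\eta_t\E V_t$. Write $\gt=\frac1n\sum_{i=1}^k\frac{n_i^t}{b_i^t}\sum_{s\in\B_i^t}\nabla\phi_s(\w_t)$ for the stochastic update direction, so that $\w_{t+1}=\w_t-\eta_t\gt$, and let $\E_t$ denote conditional expectation given $\w_t$. Two facts will be used repeatedly: the unbiasedness $\E_t\gt=\nabla P(\w_t)$ established in Section~\ref{sec:algorithm}, and the variance decomposition $\E_t\|\gt\|^2=\|\nabla P(\w_t)\|^2+V_t$.

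First I would expand $\|\w_{t+1}-\w^*\|^2=\|\w_t-\w^*\|^2-2\eta_t\langle\gt,\w_t-\w^*\rangle+\eta_t^2\|\gt\|^2$, apply $\E_t$, and solve for the inner product to get $\langle\nabla P(\w_t),\w_t-\w^*\rangle=\frac1{2\eta_t}\big(\|\w_t-\w^*\|^2-\E_t\|\w_{t+1}-\w^*\|^2\big)+\frac{\eta_t}2\|\nabla P(\w_t)\|^2+\frac{\eta_t}2 V_t$. Separately, the $(1/\gamma)$-smoothness of $P$ gives the descent estimate $\E_t P(\w_{t+1})\le P(\w_t)-\eta_t(1-\frac{\eta_t}{2\gamma})\|\nabla P(\w_t)\|^2+\frac{\eta_t^2}{2\gamma}V_t$, which is exactly the computation already displayed above Lemma~\ref{lem:progress-t}; and the $H$-strong convexity of $P$, applied at $\w_t$ against $\w^*$, yields $P(\w_t)-P(\w^*)\le\langle\nabla P(\w_t),\w_t-\w^*\rangle-\frac H2\|\w_t-\w^*\|^2$.

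The main step is to write $\E_t P(\w_{t+1})-P(\w^*)=\big(\E_t P(\w_{t+1})-P(\w_t)\big)+\big(P(\w_t)-P(\w^*)\big)$, bound the first bracket by the smoothness estimate and the second by the strong-convexity estimate with the inner product eliminated using the recursion of the previous step. After this substitution the telescoping term $\frac1{2\eta_t}\big(\|\w_t-\w^*\|^2-\E_t\|\w_{t+1}-\w^*\|^2\big)$ and the term $-\frac H2\|\w_t-\w^*\|^2$ appear exactly as in the claim, while the residual gradient-norm terms combine to $-\frac{\eta_t}2(1-\frac{\eta_t}{\gamma})\|\nabla P(\w_t)\|^2$ and the residual variance terms to $\frac{\eta_t}2(1+\frac{\eta_t}{\gamma})V_t$. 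Taking the total expectation at the end removes the conditioning and replaces $V_t$ by $\E V_t$.

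The part requiring care, rather than any genuine obstacle, is the constant bookkeeping at the close: one checks that $\eta_t\le\gamma$ is precisely the hypothesis forcing $1-\frac{\eta_t}{\gamma}\ge 0$, so the gradient-norm coefficient is non-positive and that entire term may be discarded, while simultaneously $1+\frac{\eta_t}{\gamma}\le 2$, so the variance coefficient $\frac{\eta_t}2(1+\frac{\eta_t}{\gamma})$ is bounded by $\eta_t$, matching the stated $\eta_t\E V_t$. I expect no difficulty beyond confirming these two sign/magnitude checks and the routine measurability needed to pass from $\E_t$ to the unconditional expectation.
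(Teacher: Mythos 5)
Your proof is correct and follows essentially the same route as the paper's: the distance recursion $\|\w_{t+1}-\w^*\|^2=\|\w_t-\w^*\|^2-2\eta_t\langle\g_t,\w_t-\w^*\rangle+\eta_t^2\|\g_t\|^2$, the smoothness descent estimate, $H$-strong convexity of $P$, and the variance decomposition $\E\|\g_t\|^2=\V(\g_t)+\|\nabla P(\w_t)\|^2$, with identical coefficient arithmetic ($\frac{\eta_t}{2}+\frac{\eta_t^2}{2\gamma}\le\eta_t$ and the gradient-norm coefficient non-positive when $\eta_t\le\gamma$). The only cosmetic difference is that the paper packages the strong-convexity step as a stochastic quantity $\delta_t$ built from the sampled $\phi_s(\w_t)-\phi_s(\w^*)$ and argues $\E\delta_t\ge 0$, whereas you apply strong convexity directly to $P$ after taking conditional expectation; the two are equivalent.
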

\begin{proof}
To simplify the analysis, we denote $\g_t=\frac{1}{n}\sum^k_{i=1}\frac{n_i^t}{b_i^t}\sum_{s\in\B_i^t}\nabla\phi_s(\w_t)$, and
\[
\delta_t=\Big\langle \frac{1}{n}\sum^k_{i=1}\frac{n_i^t}{b_i^t}\sum_{s\in\B_i^t}\nabla\phi_s(\w_t),   \w_t-\w^*\Big\rangle-\Big[ \frac{1}{n}\sum^k_{i=1}\frac{n_i^t}{b_i^t}\sum_{s\in\B_i^t}(\phi_s(\w_t)-\phi_s(\w^*))+\frac{H}{2}\|\w_t-\w^*\|^2\Big].
\]
Given these notations, we can first derive
\bqs
&&\hspace{-0.2in} \|\w_t-\w^*\|^2 - \|\w_{t+1}-\w^*\|^2=\|\w_t-\w^*\|^2 - \big\|\w_t - \eta_t\g_t-\w^*\big\|^2= 2\Big\langle \eta_t\g_t,   \w_t-\w^*\Big\rangle
- \big\|\eta_t\g_t \big\|^2\\
&&\hspace{-0.2in}= 2\eta_t \delta_t+2\eta_t\Big[ \frac{1}{n}\sum^k_{i=1}\frac{n_i^t}{b_i^t}\sum_{s\in\B_i^t}(\phi_s(\w_t)-\phi_s(\w^*))+\frac{H}{2}\|\w_t-\w^*\|^2\Big]-\|\eta_t \g_t \|^2.
\eqs
Taking expectation of the above equality, and using the fact
\bqs
\E \delta_t=\Big\langle\nabla P(\w_t),   \w_t-\w^*\Big\rangle-\Big[P(\w_t)-P(\w^*)+\frac{H}{2}\|\w_t-\w^*\|^2\Big]\ge 0,
\eqs
we can obtain
\bqs
&&\hspace{-0.3in}\E[\|\w_t-\w^*\|^2 - \|\w_{t+1}-\w^*\|^2] \ge 2\eta_t\E \Big[P(\w_t)-P(\w^*)+\frac{H}{2}\|\w_t-\w^*\|^2\Big]-\eta_t^2\E\|\g_t\|^2.
\eqs
In addition, using the fact that $P$ is $(1/\gamma)$-smooth and $\w_{t+1}=\w_t - \eta_t \g_t$, we can derive
\bqs
 \E P(\w_{t+1}) \le \E P(\w_t) - \eta_t \E \|\nabla P(\w_t)\|^2 + \frac{\eta^2_t}{2\gamma}\E\|\g_t\|^2.
\eqs
Combining the above two inequalities, we can get
\bqs
&&\E[P(\w_{t+1})-P(\w^*)]\\
&&\le\frac{1}{2\eta_t}\E[\|\w_t-\w^*\|^2 \hspace{-0.03in}- \|\w_{t+1}-\w^*\|^2]-\frac{H}{2}\E\|\w_t-\w^*\|^2 \hspace{-0.03in}+ (\frac{\eta_t}{2}+\frac{\eta_t^2}{2\gamma})\E\|\g_t\|^2-\eta_t \E\|\nabla P(\w_t)\|^2.
\eqs
Combing the above inequality with the facts $\E\|\g_t\|^2 =\V(\g_t) + (\|\E \g_t\|)^2=\V(\g_t) + \|\nabla P(\w_t)\|^2$, and $\eta_t\in (0,\gamma]$ will conclude the the proof of this lemma.
\end{proof}

%%==========================================================================
%% theorem 1
%%==========================================================================
Given the above lemma, we will prove a convergence result for the proposed algorithm, when  $P(\w)$ is $H$-strongly convex and $(1/\gamma)$-smooth.
\begin{thm}
Suppose $P(\w)$ is $H$-strongly convex and $(1/\gamma)$-smooth. If we set $\eta_t=1/(a + H t)$ where $a\ge 1/\gamma -H$, then the proposed algorithm will satisfy the following inequality for all $T$,
\bqs
\inf_{t\in[T]}\E P(\w_{t+1})- P(\w^*)\le \frac{1}{T}\sum^T_{t=1}\E P(\w_{t+1})-P(\w^*)\le\frac{1}{T}\Big[\frac{a}{2}\|\w^*\|^2  +\E\sum^T_{t=1}\frac{V_t}{a+Ht}\Big].
\eqs
where $V_t=\V\Big(\frac{1}{n}\sum^k_{i=1}\frac{n_i^t}{b_i^t}\sum_{s\in\B_i^t}\nabla\phi_s(\w_t)\Big)$.
\end{thm}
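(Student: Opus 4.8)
The plan is to obtain the second inequality by summing Lemma~\ref{lem:progress-t} over $t = 1, \dots, T$ and exploiting a telescoping structure in the distances $\E\|\w_t - \w^*\|^2$; the first inequality requires no work, since the infimum of a finite collection of numbers never exceeds their average.

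First I would verify that the prescribed step size is admissible for the lemma. For every $t \ge 1$ we have $a + Ht \ge a + H \ge 1/\gamma$ by the hypothesis $a \ge 1/\gamma - H$, so $\eta_t = 1/(a + Ht) \in (0, \gamma]$ and Lemma~\ref{lem:progress-t} applies at each step. Substituting $1/(2\eta_t) = (a + Ht)/2$ into the lemma gives, for each $t$,
\bqs
\E[P(\w_{t+1}) - P(\w^*)] &\le& \frac{a + Ht}{2}\E\|\w_t - \w^*\|^2 - \frac{a + Ht}{2}\E\|\w_{t+1} - \w^*\|^2 \\
&& -\, \frac{H}{2}\E\|\w_t - \w^*\|^2 + \frac{\E V_t}{a + Ht}.
\eqs
The crucial observation is that the strong-convexity penalty $-\frac{H}{2}\E\|\w_t - \w^*\|^2$ is exactly what is needed to shift the leading coefficient by one index: combining it with the first term produces coefficient $(a + H(t-1))/2$ on $\E\|\w_t - \w^*\|^2$.

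Writing $D_t = \E\|\w_t - \w^*\|^2$ and $c_t = (a + H(t-1))/2$, so that $c_{t+1} = (a + Ht)/2$, the per-step bound becomes $\E[P(\w_{t+1}) - P(\w^*)] \le c_t D_t - c_{t+1} D_{t+1} + \E V_t/(a + Ht)$. Summing over $t = 1, \dots, T$, the weighted-distance terms telescope to $c_1 D_1 - c_{T+1} D_{T+1}$; dropping the nonnegative term $c_{T+1} D_{T+1}$, using $c_1 = a/2$, and invoking the initialization $\w_1 = 0$ (so that $D_1 = \|\w^*\|^2$) leaves $\sum_{t=1}^T \E[P(\w_{t+1}) - P(\w^*)] \le \frac{a}{2}\|\w^*\|^2 + \E\sum_{t=1}^T V_t/(a + Ht)$, and dividing by $T$ yields the claim. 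There is no genuine obstacle beyond the coefficient bookkeeping in the index shift; the only point requiring care is confirming that the $-\frac{H}{2}$ penalty from strong convexity matches the step-size recursion so the weights telescope cleanly, which is precisely why the schedule $\eta_t = 1/(a + Ht)$ is chosen.
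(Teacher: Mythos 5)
Your proposal is correct and follows essentially the same route as the paper's own proof: verify $\eta_t\in(0,\gamma]$ from $a\ge 1/\gamma-H$, apply Lemma~\ref{lem:progress-t} with $\eta_t=1/(a+Ht)$, sum over $t$, let the strong-convexity term shift the coefficient so the weighted distances telescope, drop the nonpositive terminal term, use $\w_1=0$, and divide by $T$. Your explicit bookkeeping with $c_t=(a+H(t-1))/2$ is just a cleaner way of writing the same telescoping step the paper performs implicitly.
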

\begin{proof}
Firstly, it is easy to verify $\eta_t\in (0,\gamma]$, $\forall t\ge 1$. Because $P(\w)$ and $\eta_t$ satisfy the assumptions in Lemma 1,  we have
\bqs
\E \Big[P(\w_{t+1})-P(\w^*)\Big]\le \frac{1}{2\eta_t}\E[\|\w_t-\w^*\|^2 - \|\w_{t+1}-\w^*\|^2]-\frac{H}{2}\E\|\w_t-\w^*\|^2 + \eta_t \E V_t,
\eqs
Summing the above inequality over $t=1,2,\ldots,T$, and using $\eta_t = 1/(a+Ht)$, we get
\bqs
&&\hspace{-0.3in}\sum^T_{t=1}\E P(\w_{t+1})-\sum^T_{t=1}P(\w^*)\\
&&\hspace{-0.3in}\le \sum^T_{t=1}\frac{a+Ht}{2}\E[\|\w_t-\w^*\|^2 - \|\w_{t+1}-\w^*\|^2]-\frac{H}{2}\sum^T_{t=1}\E\|\w_t-\w^*\|^2 + \E\sum^T_{t=1}\frac{V_t}{a+Ht}\\
&&\hspace{-0.3in}= \frac{a}{2}\E\|\w_1-\w^*\|^2 - \frac{a+HT}{2}\E\|\w_{T+1}-\w^*\|^2 + \E\sum^T_{t=1}\frac{V_t}{a+Ht}\le\frac{a}{2}\|\w^*\|^2  +\E\sum^T_{t=1}\frac{V_t}{a+Ht}.
\eqs
Dividing the above inequality with $T$ will conclude the theorem.
\end{proof}
If $V_t=0$ for all $t\in[T]$, then the above theorem will give a $O(1/T)$ convergence bound, which is the same as the convergence bound of gradient descent for a convex ($0$-strongly convex) and smooth objective function. However, if the objective function is $H$-strongly convex with $H>0$, the convergence bound of gradient descent is $O(c^T)$ with $c\in(0,1)$, which is significantly better than the bound in this theorem.
Therefore in addition to the above theorem, we also prove a linear convergence bound when $V_t=0,\ \forall t\in[T]$. 
First we prove a technical lemma as follows.
\begin{lemma}
Suppose $P(\w)$ is $H$-strongly convex and $(1/\gamma)$-smooth over $\R^d$. Then we have the following inequality for any $\u,\v\in\R^d$,
\bqs
\langle\nabla P(\u)-\nabla P(\v), \u-\v\rangle \ge \frac{H/\gamma}{H + 1/\gamma}\|\u-\v\|^2 + \frac{1}{H + 1/\gamma}\|\nabla P(\u) - \nabla P(\v)\|^2.
\eqs
\end{lemma}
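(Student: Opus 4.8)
The plan is to reduce this mixed strong-convexity/smoothness inequality to the pure co-coercivity statement already recorded in the Property, by peeling off the strongly convex part. Concretely, I would introduce the auxiliary function $g(\w) = P(\w) - \frac{H}{2}\|\w\|^2$, whose gradient is $\nabla g(\w) = \nabla P(\w) - H\w$. The whole argument rests on establishing two facts about $g$: that it is convex ($0$-strongly convex) and that it is $(1/\gamma - H)$-smooth.

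Convexity of $g$ is immediate from the $H$-strong convexity of $P$: subtracting $\frac{H}{2}\|\w\|^2$ exactly cancels the quadratic slack term in the strong-convexity definition, leaving the plain first-order convexity inequality for $g$. For smoothness, I would start from the $(1/\gamma)$-smoothness inequality for $P$ and substitute $P(\cdot) = g(\cdot) + \frac{H}{2}\|\cdot\|^2$; after expanding the squared-norm terms and using the identity $\|\u\|^2 = \|\v\|^2 + 2\v^\top(\u-\v) + \|\u-\v\|^2$, the quadratic pieces collapse so that the residual bound on $g$ is precisely $g(\u) \le g(\v) + \nabla g(\v)^\top(\u-\v) + \frac{1}{2}(1/\gamma - H)\|\u-\v\|^2$, i.e.\ $g$ is $(1/\gamma - H)$-smooth. (Implicitly this uses $1/\gamma \ge H$, which always holds since a function cannot be more strongly convex than it is smooth.)

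With these two properties in hand, I would apply the co-coercivity Property to $g$ with parameter $\gamma' = 1/(1/\gamma - H)$, giving
\[
\langle\nabla g(\u) - \nabla g(\v), \u-\v\rangle \ge \gamma' \|\nabla g(\u) - \nabla g(\v)\|^2.
\]
Substituting $\nabla g(\w) = \nabla P(\w) - H\w$ and abbreviating $A = \langle\nabla P(\u) - \nabla P(\v), \u-\v\rangle$, $B = \|\nabla P(\u) - \nabla P(\v)\|^2$, $C = \|\u-\v\|^2$, the left side becomes $A - HC$ and the right side expands to $\gamma'(B - 2HA + H^2 C)$. Collecting the $A$ terms yields $A(1 + 2H\gamma') \ge \gamma' B + (\gamma' H^2 + H)C$, and since $1 + 2H\gamma' = (1/\gamma + H)/(1/\gamma - H)$ the two coefficients simplify exactly to $\frac{1}{H + 1/\gamma}$ and $\frac{H/\gamma}{H + 1/\gamma}$, which is the claimed bound.

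I expect the main obstacle to be the smoothness verification in the second step --- choosing the right auxiliary function and confirming that its smoothness modulus is $1/\gamma - H$ rather than something else; the concluding rearrangement is routine bookkeeping once $\gamma'$ is identified. As a fallback, should the auxiliary-function route prove awkward, I would instead add the two first-order inequalities obtained by applying the strong-convexity lower bound and the smoothness upper bound symmetrically in $\u$ and $\v$, but I believe the $g = P - \frac{H}{2}\|\cdot\|^2$ reduction is the cleanest path.
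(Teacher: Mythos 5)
Your proposal is correct and follows essentially the same route as the paper's own proof: both define the auxiliary function $P(\w)-\frac{H}{2}\|\w\|^2$, verify it is convex and $(1/\gamma-H)$-smooth, apply the co-coercivity Property with parameter $1/(1/\gamma-H)$, and rearrange. Your explicit bookkeeping with $A$, $B$, $C$ simply spells out the final rearrangement that the paper leaves to the reader, and your algebra checks out.
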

\begin{proof}
To simplify the notations, we define
\bqs
f(\w) = P(\w) - \frac{H}{2}\|\w\|^2.
\eqs
Firstly, $f(\w)$ is convex ($0$-strongly convex), since the following inequality holds for any $\u,\v\in\R^d$,
\bqs
f(\u)-[f(\v)+\langle\nabla f(\v), \u-\v\rangle]&=&P(\u)-\frac{H}{2}\|\u\|^2-[P(\v)-\frac{H}{2}\|\v\|^2+\langle\nabla P(\v)-H\v, \u-\v\rangle]\\
&=&P(\u)-[P(\v) +\langle\nabla P(\v), \u-\v\rangle +\frac{H}{2}\|\u-\v\|^2]\ge 0,
\eqs
where the final inequality used the $H$-strongly convexity of $P(\w)$.

Secondly, $f(\w)$ is $(1/\gamma - H)$-smooth, since the following inequality holds for any $\u,\v\in\R^d$,
\bqs
&&f(\u) - [f(\v)+\langle\nabla f(\v), \u-\v\rangle +\frac{1/\gamma-H}{2}\|\u-\v\|^2]\\
&&=P(\u)-\frac{H}{2}\|\u\|^2-[P(\v)-\frac{H}{2}\|\v\|^2+\langle\nabla P(\v)-H\v, \u-\v\rangle+ \frac{1/\gamma-H}{2}\|\u-\v\|^2]\\
&&=P(\u) - [P(\v)+\langle\nabla P(\v), \u-\v\rangle +\frac{1}{2\gamma}\|\u-\v\|^2]\le 0,
\eqs
where the final inequality used the fact $P(\w)$ is $(1/\gamma)$-smooth.

Because $f(\w)$ is convex and $(1/\gamma-H)$-smooth, according to the Property 1, the co-coercivity of $f(\w)$ with parameter $\frac{1}{1/\gamma - H}$ gives
\bqs
\langle (\nabla P(\u)-H\u)- (\nabla P(\v)-H\v), \u-\v\rangle \ge \frac{1}{1/\gamma-H} \|(\nabla P(\u)-H\u)-(\nabla P(\v)-H\v)\|^2.
\eqs
Re-arranging the above inequality conclude the proof of this lemma.
\end{proof}
Given the above lemma, we can prove another bound for the proposed algorithm as follows, when  $P(\w)$ is  $H$-strongly convex and $(1/\gamma)$-smooth.
\begin{thm}
Suppose $P(\w)$ is  $H$-strongly convex and $(1/\gamma)$-smooth. If we set $\eta_t =\eta\in(0, \frac{2}{H+1/\gamma}]$, then the proposed algorithm will satisfy the following inequality for all $T$,
\bqs
\E P(\w_{T+1})-P(\w^*)\le \frac{\alpha(\eta)^T}{2\gamma}\|\w^*\|^2 +\frac{\eta^2}{2\gamma}\sum^T_{t=1}\alpha(\eta)^{T-t}\E V_t,
\eqs
where $\alpha(\eta)=1- \frac{2 \eta H/\gamma}{H + 1/\gamma}\in[(\frac{H-1/\gamma}{H+1/\gamma})^2, 1)$ and $V_t=\V\Big(\frac{1}{n}\sum^k_{i=1}\frac{n_i^t}{b_i^t}\sum_{s\in\B_i^t}\nabla\phi_s(\w_t)\Big)$.
\end{thm}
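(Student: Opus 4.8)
The plan is to prove a one-step contraction on the squared distance $\E\|\w_{t+1}-\w^*\|^2$ and then unroll it, converting the resulting distance bound into a function-value bound via smoothness. First I would write $\g_t=\frac{1}{n}\sum^k_{i=1}\frac{n_i^t}{b_i^t}\sum_{s\in\B_i^t}\nabla\phi_s(\w_t)$, so that the update is $\w_{t+1}=\w_t-\eta\g_t$ with $\E[\g_t|\w_t]=\nabla P(\w_t)$. Expanding the square and taking the conditional expectation gives
\[
\E[\|\w_{t+1}-\w^*\|^2\,|\,\w_t]=\|\w_t-\w^*\|^2-2\eta\langle\nabla P(\w_t),\w_t-\w^*\rangle+\eta^2\E[\|\g_t\|^2\,|\,\w_t].
\]
Using the variance decomposition $\E[\|\g_t\|^2|\w_t]=V_t+\|\nabla P(\w_t)\|^2$ already established earlier in the excerpt, the second-moment term splits into a $V_t$ contribution plus a $\|\nabla P(\w_t)\|^2$ contribution.

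The crucial step is to apply Lemma 2 with $\u=\w_t$ and $\v=\w^*$, noting $\nabla P(\w^*)=0$, so that $\langle\nabla P(\w_t),\w_t-\w^*\rangle\ge \frac{H/\gamma}{H+1/\gamma}\|\w_t-\w^*\|^2+\frac{1}{H+1/\gamma}\|\nabla P(\w_t)\|^2$. Substituting this lower bound, the coefficient of $\|\w_t-\w^*\|^2$ becomes exactly $\alpha(\eta)=1-\frac{2\eta H/\gamma}{H+1/\gamma}$, while the coefficient of $\|\nabla P(\w_t)\|^2$ becomes $\eta^2-\frac{2\eta}{H+1/\gamma}=\eta(\eta-\frac{2}{H+1/\gamma})$. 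This is where the step-size restriction $\eta\in(0,\frac{2}{H+1/\gamma}]$ enters: it forces this coefficient to be nonpositive, so the gradient term can be discarded, leaving the clean recursion
\[
\E\|\w_{t+1}-\w^*\|^2\le \alpha(\eta)\,\E\|\w_t-\w^*\|^2+\eta^2\E V_t.
\]

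I would then unroll this recursion over $t=1,\ldots,T$, using $\w_1=0$ so that $\|\w_1-\w^*\|^2=\|\w^*\|^2$, to obtain $\E\|\w_{T+1}-\w^*\|^2\le\alpha(\eta)^T\|\w^*\|^2+\eta^2\sum^T_{t=1}\alpha(\eta)^{T-t}\E V_t$. Finally, applying $(1/\gamma)$-smoothness at the minimizer gives $P(\w_{T+1})-P(\w^*)\le\frac{1}{2\gamma}\|\w_{T+1}-\w^*\|^2$ (again using $\nabla P(\w^*)=0$); taking expectations and scaling the distance bound by $\frac{1}{2\gamma}$ yields the claimed inequality. The stated range of $\alpha(\eta)$ is then a short elementary check: $\alpha(\eta)<1$ since $\eta>0$, and $\alpha(\eta)$ is decreasing in $\eta$, so its minimum is attained at $\eta=\frac{2}{H+1/\gamma}$ and equals $1-\frac{4H/\gamma}{(H+1/\gamma)^2}=\big(\frac{H-1/\gamma}{H+1/\gamma}\big)^2$.

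The main obstacle is recognizing that Lemma 2 is precisely the tool that makes everything fit: it simultaneously supplies the strong-convexity-type term producing the contraction factor $\alpha(\eta)$ and the gradient-norm term that cancels against the $\eta^2\|\nabla P(\w_t)\|^2$ contribution coming from the variance decomposition. Without this gradient-norm term, the $\eta^2\|\nabla P(\w_t)\|^2$ piece could not be absorbed and one would only obtain a weaker contraction. Tracking how the step-size bound forces the leftover coefficient of $\|\nabla P(\w_t)\|^2$ to be nonpositive is the single point requiring care; everything else is routine unrolling and a smoothness estimate.
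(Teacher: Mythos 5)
Your proposal is correct and follows essentially the same route as the paper's own proof: expand $\|\w_{t+1}-\w^*\|^2$, apply the variance decomposition $\E\|\g_t\|^2=V_t+\|\nabla P(\w_t)\|^2$ and Lemma 2 at $(\w_t,\w^*)$, use the step-size condition to drop the gradient-norm term, unroll the resulting contraction, and finish with the smoothness bound $P(\w_{T+1})-P(\w^*)\le\frac{1}{2\gamma}\|\w_{T+1}-\w^*\|^2$. Your added verification of the range of $\alpha(\eta)$ is a small bonus the paper states without proof, but otherwise the two arguments coincide step for step.
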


\begin{proof}
Using the fact $\w_{t+1} = \w_t - \eta \g_t$ where $\g_t=\frac{1}{n}\sum^k_{i=1}\frac{n_i^t}{b_i^t}\sum_{s\in\B_i^t}\nabla\phi_s(\w_t)$, we have
\bqs
&&\|\w_{t+1}-\w^*\|^2 =\|\w_t-\eta\g_t -\w^*\|^2=\|\w_t-\w^*\|^2 - 2\eta\langle \g_t, \w_t-\w^*\rangle + \eta^2 \|\g_t\|^2.
\eqs
Taking expectation of the above inequality, using $\E\|\g_t\|^2=\V(\g_t)+\|\E\g_t\|^2=\V(\g_t)+\|\nabla P(\w_t)\|^2$ and using Lemma 2, will derive
\bqs
&&\E \|\w_{t+1}-\w^*\|^2 = \E\|\w_t-\w^*\|^2 - 2\eta \E \langle\nabla P(\w_t), \w_t-\w^*\rangle + \eta^2\E[V_t +\|\nabla P(\w_t)\|^2]\\
&&\le (1- \frac{2 \eta H/\gamma}{H + 1/\gamma})\E\|\w_t-\w^*\|^2 + \eta(\eta- \frac{2}{H + 1/\gamma})\E\|\nabla P(\w_t)\|^2 + \eta^2 \E V_t\\
&&\le \alpha(\eta) \E\|\w_t-\w^*\|^2  + \eta^2 \E V_t,
\eqs
where the final inequality used $\eta\in(0, \frac{2}{H+1/\gamma}]$.
Using the above inequality iteratively, we can derive
\bqs
\E \|\w_{T+1}-\w^*\|^2 \hspace{-0.1in}&\le&\hspace{-0.1in} \alpha(\eta) \E\|\w_T-\w^*\|^2  + \eta^2 \E V_T\le \alpha(\eta)\Big[ \alpha(\eta)\E\|\w_{T-1}-\w^*\|^2 + \eta^2\E V_{T-1}\Big]+ \eta^2 \E V_T\\
&\le&\ldots\le \alpha(\eta)^T\E\|\w_{1}-\w^*\|^2 +\alpha(\eta)^{T-1}\eta^2\E V_1 +\ldots +\alpha(\eta) \eta^2 \E V_{T-1}+ \eta^2 \E V_T\\
&=& \alpha(\eta)^T\E\|\w_{1}-\w^*\|^2 +\eta^2\sum^T_{t=1}\alpha(\eta)^{T-t}\E V_t .
\eqs
In addition using the facts $P(\w)$ is $(1/\gamma)$-smooth and $\nabla P(\w^*)=0$, we get
\bqs
P(\w)-P(\w^*)\le\langle\nabla P(\w^*), \w-\w^*\rangle + \frac{1}{2\gamma}\|\w-\w^*\|^2=\frac{1}{2\gamma}\|\w-\w^*\|^2.
\eqs
Combining the above two inequalities concludes the proof.
\end{proof}
If $V_t=0$ for all $t\in[T]$, then the above theorem will given a convergence bound of $O(\alpha(\eta)^T)$ with $\alpha(\eta)\in[(\frac{H-1/\gamma}{H+1/\gamma})^2, 1)$, which is the same with convergence bound of gradient descent for a $H$-strongly ($H>0$) convex and smooth objective function.

\section{Experiments}
\label{sec:experiment}
In this section, we evaluate the empirical performance of the proposed algorithm by comparing it to SGD with uniform sampling.

\subsection{Experimental Setup}
To extensively examine the performance, we test all the algorithms on a number of benchmark datasets from web machine learning repositories. Table~\ref{table:datasets} shows the details of the datasets used in our experiments. All of these datasets can be downloaded from the LIBSVM website~\footnote{\url{http://www.csie.ntu.edu.tw/~cjlin/libsvmtools/}}. These datasets were chosen fairly randomly in order to cover various sizes of datasets.
\begin{table}[htpb]
\renewcommand*\arraystretch{1.0}
\begin{center}
\caption{%\small
Details of the datasets in our experiments.}\label{table:datasets}
%\begin{small}
\begin{tabular}{|l|r|r|r|r|c|}        \hline
Dataset  &$\#$ class & Training size & Testing size  &     $\#$ features & minibatch size\\
\hline\hline
covtype.binary &2  & 523,124  & 57,888  & 54  & 10 \\
letter         &26 &  15,000  &  5,000  & 16  & 26\\
mnist          &10 &  60,000  & 10,000  & 780 & 10\\
pendigits      &10 &   7,494  &  3,498  & 16  & 13 \\
usps           &10 &   7,291  &  2,007  & 256 & 48
\\\hline
\end{tabular}
%\end{small}
\end{center}
\renewcommand*\arraystretch{1.0}
\end{table}

To make a fair comparison, all algorithms adopted the same setup in our experiments. In particular,  we performed L2-regularized multiclass logistic regression (convex optimization) to tackle these classification tasks. The regularization parameters of multiclass logistic regression is set as $10^{-5}$, $10^{-4}$, $10^{-4}$, $10^{-3}$ and $10^{-3}$ for \emph{covtype.binay}, \emph{letter},  \emph{mnist}, \emph{pendigits}, and  \emph{usps} respectively. For SGD with uniform sampling and stratified sampling, the step size for the $t$-th round is set as $\eta_t=1/(\lambda t)$ for all the datasets. The minibatch sizes, 
given in Table!\ref{table:datasets},
were chosen as the same value for SGD and SGD-ss on every dataset. For SGD-ss, we used a stratified sampling strategy by solving the optimization problems~\eqref{eqn:minibatch-size-lipschitz} and ~\eqref{eqn:predefined-stratified-sampling-C}.
Note that instead of  using~\eqref{eqn:predefined-stratified-sampling-C},
the clusters $\{\C_i\}$ can also be obtained via the simpler $k$-means method without performance degradation.

All the experiments were conducted by fixing 5 different random seeds for each datasets. All the results were reported by averaging over these 5 runs. We evaluated the algorithms' performance by measuring \emph{primal objective value } on training dataset, i.e., $P(\w_t)$. In addition, to examine the performance of resulting classifiers on test datasets, we also evaluated the \emph{test error rate}. Finally, we also compared the actual variances of stochastic gradient estimators of the two algorithms to check whether the proposed algorithm can effectively reduce the variance.

\subsection{Performance Evaluation}
\begin{figure*}[hptb]
\begin{center}
\includegraphics[width=2.1in]{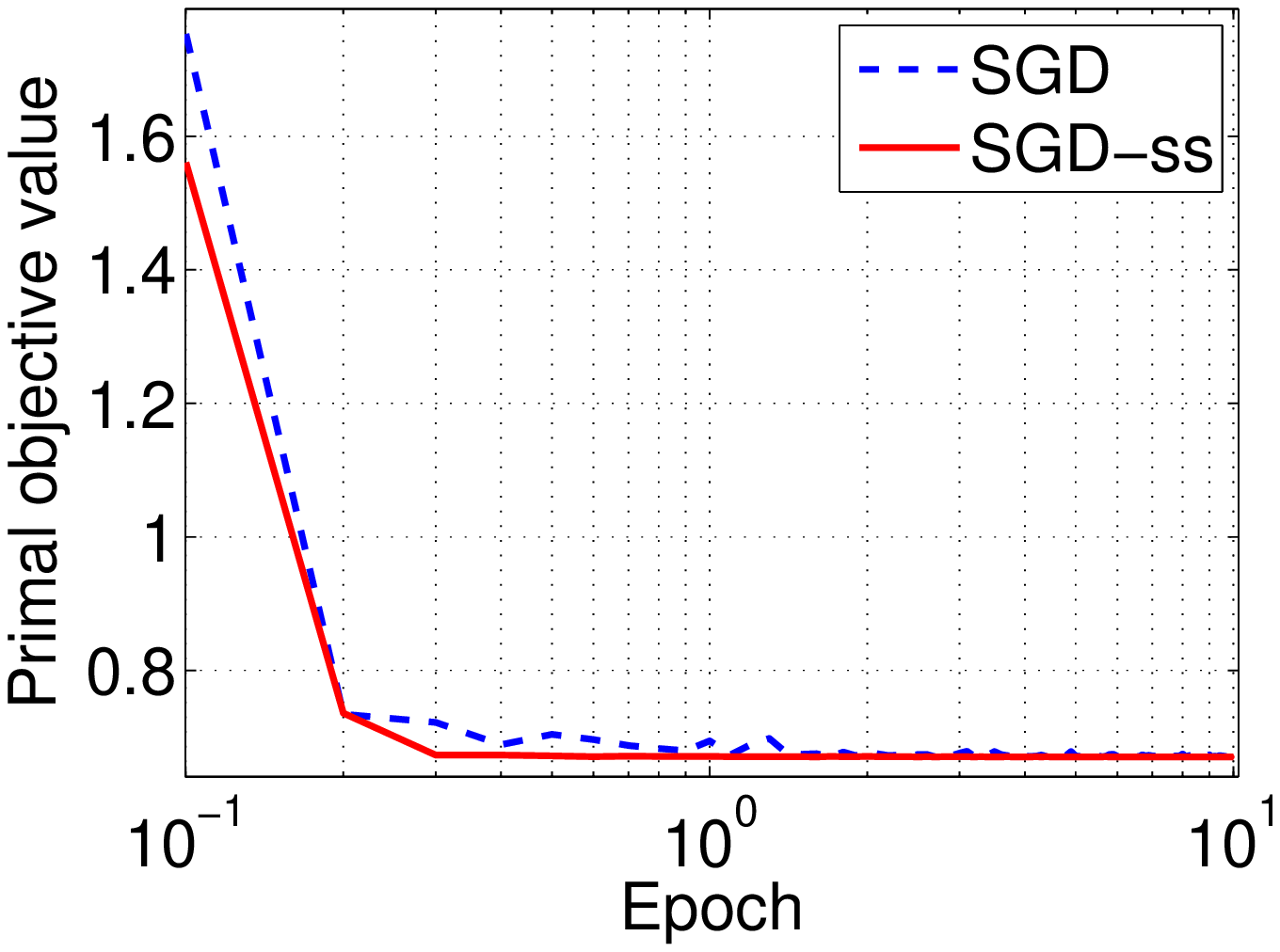}
\includegraphics[width=2.1in]{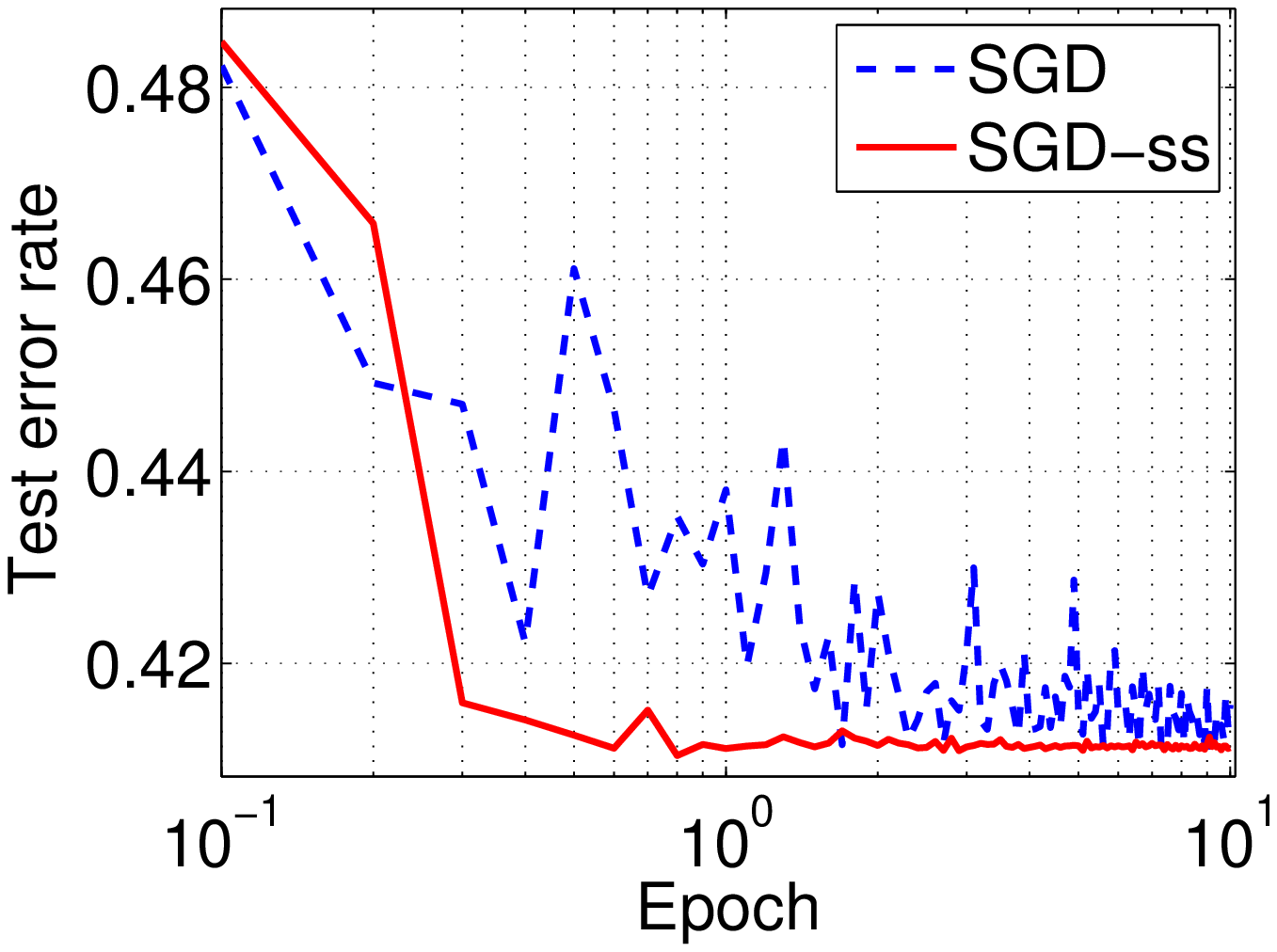}
\includegraphics[width=2.1in]{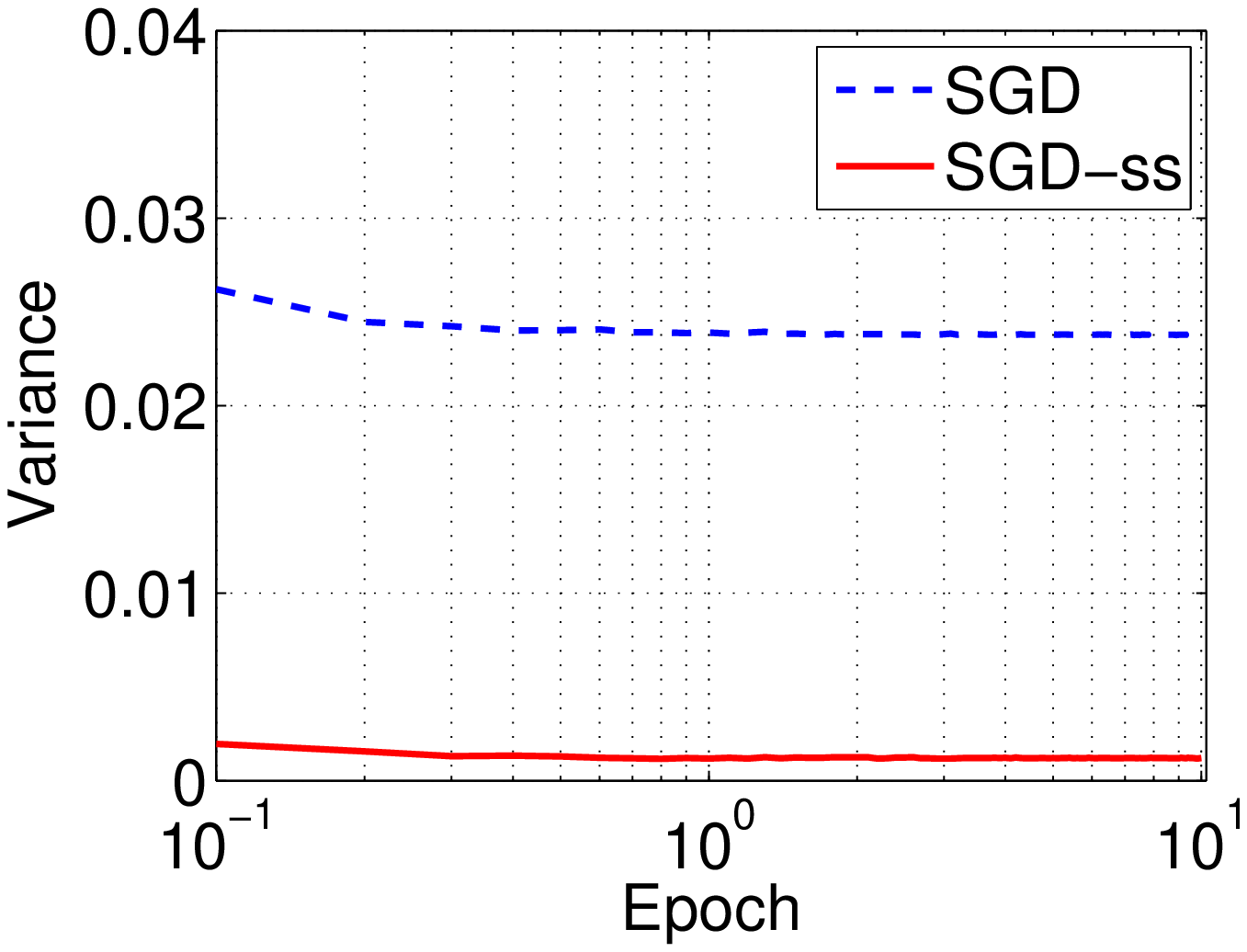}
{\scriptsize \makebox[2.1in]{(a)~Primal objective value }~\makebox[2.1in]{(b)~Test error rate }~\makebox[2.1in]{(c)~Variance }}
\end{center}\vspace{-0.15in}
\caption{Multiclass logistic regression (convex) on {\bf covtype.binary}. Epoch for the horizontal axis
is the number of iterations times the minibatch size divided by the training data size.}
\label{fig:covtype.binary}
\end{figure*}

Figure~\ref{fig:covtype.binary} summarizes the experimental results on the dataset \emph{covtype.binary}. First, Figure~\ref{fig:covtype.binary} (a) shows the primal objective values of SGD-ss in comparison to that of SGD with uniform sampling. We can observe that SGD-ss converges faster and is much more stable than SGD. Because these two algorithm adopted the same minibatch size and learning rates, this observation clearly implies that the proposed stratified sampling strategy is more effective to reduce the variance of the stochastic gradient estimator than uniform sampling. Second, Figure~\ref{fig:covtype.binary} (b) provides test error rates of the two algorithms, where we observe that SGD-ss achieves significantly smaller and stable test error rates than that of SGD. This shows that the proposed stratified sampling approach is effective in improving the testing performance and reducing its variance. Third, according to Figure~\ref{fig:covtype.binary} (c), we observe that the variance for SGD-ss is significantly smaller than that of SGD with uniform sampling. This again demonstrates the effectiveness of the proposed sampling strategy to reduce the variance of the unbiased stochastic gradient estimators.

\begin{figure*}[hptb]
\begin{center}
\includegraphics[width=1.55in]{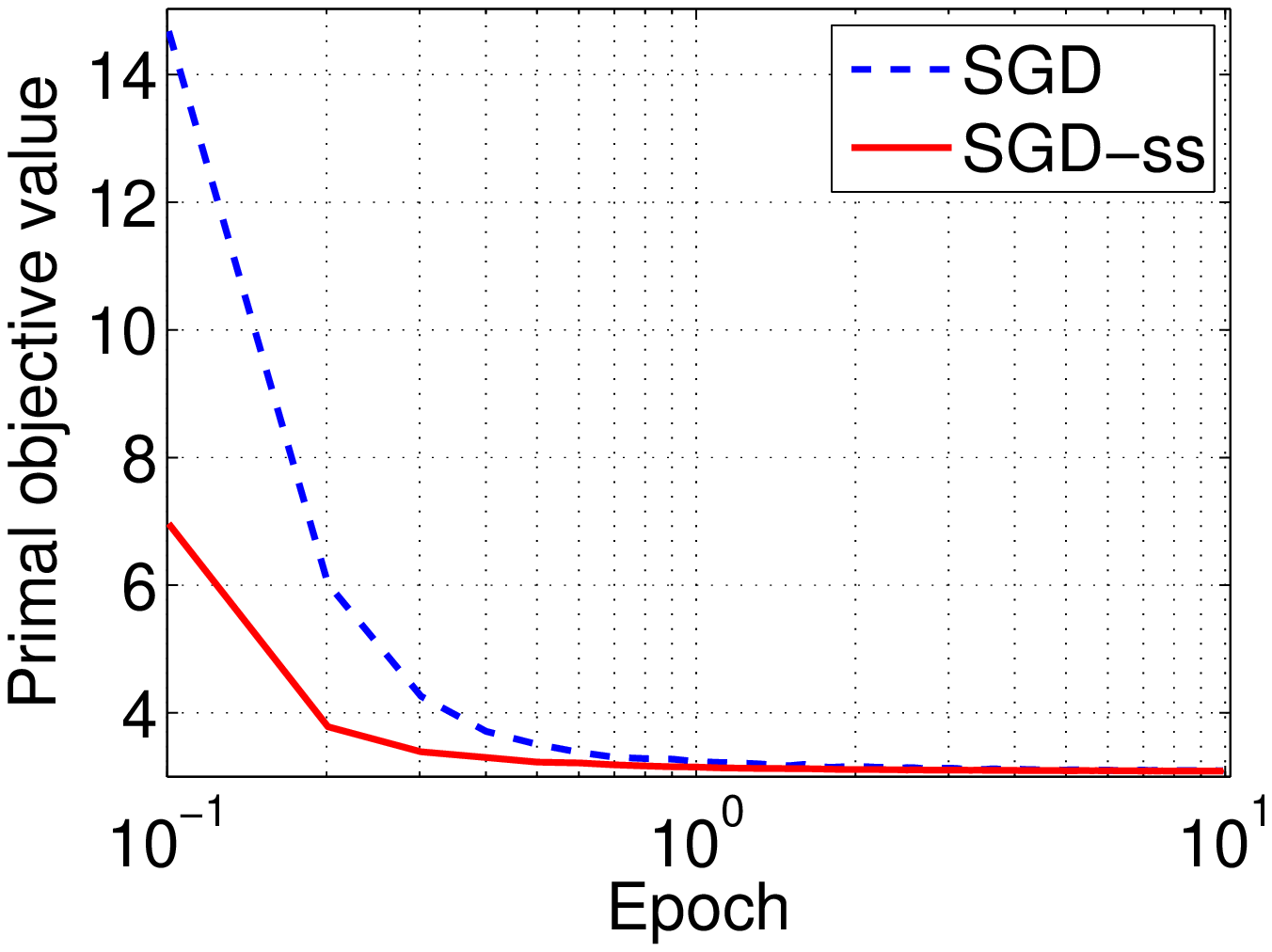}
\includegraphics[width=1.55in]{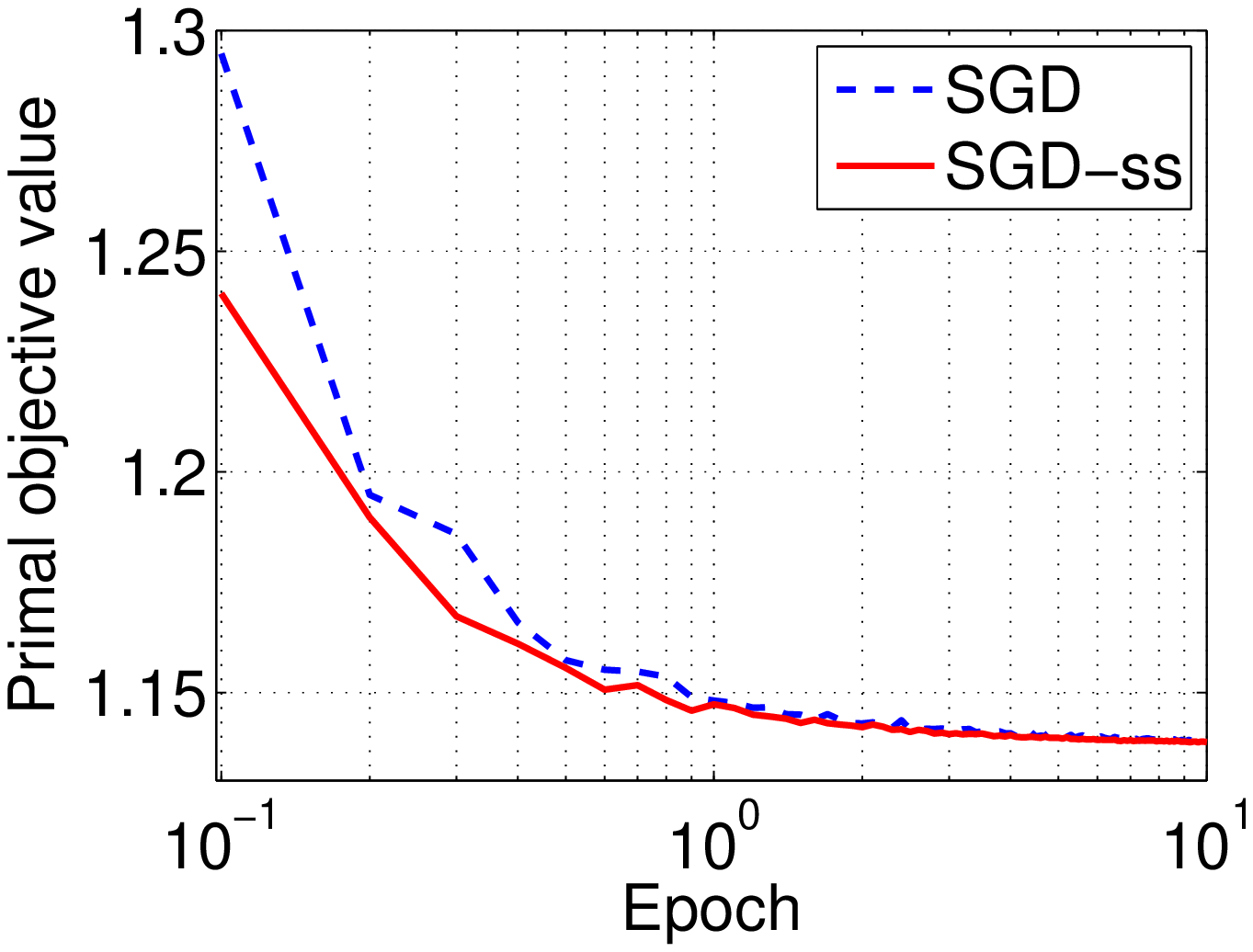}
\includegraphics[width=1.55in]{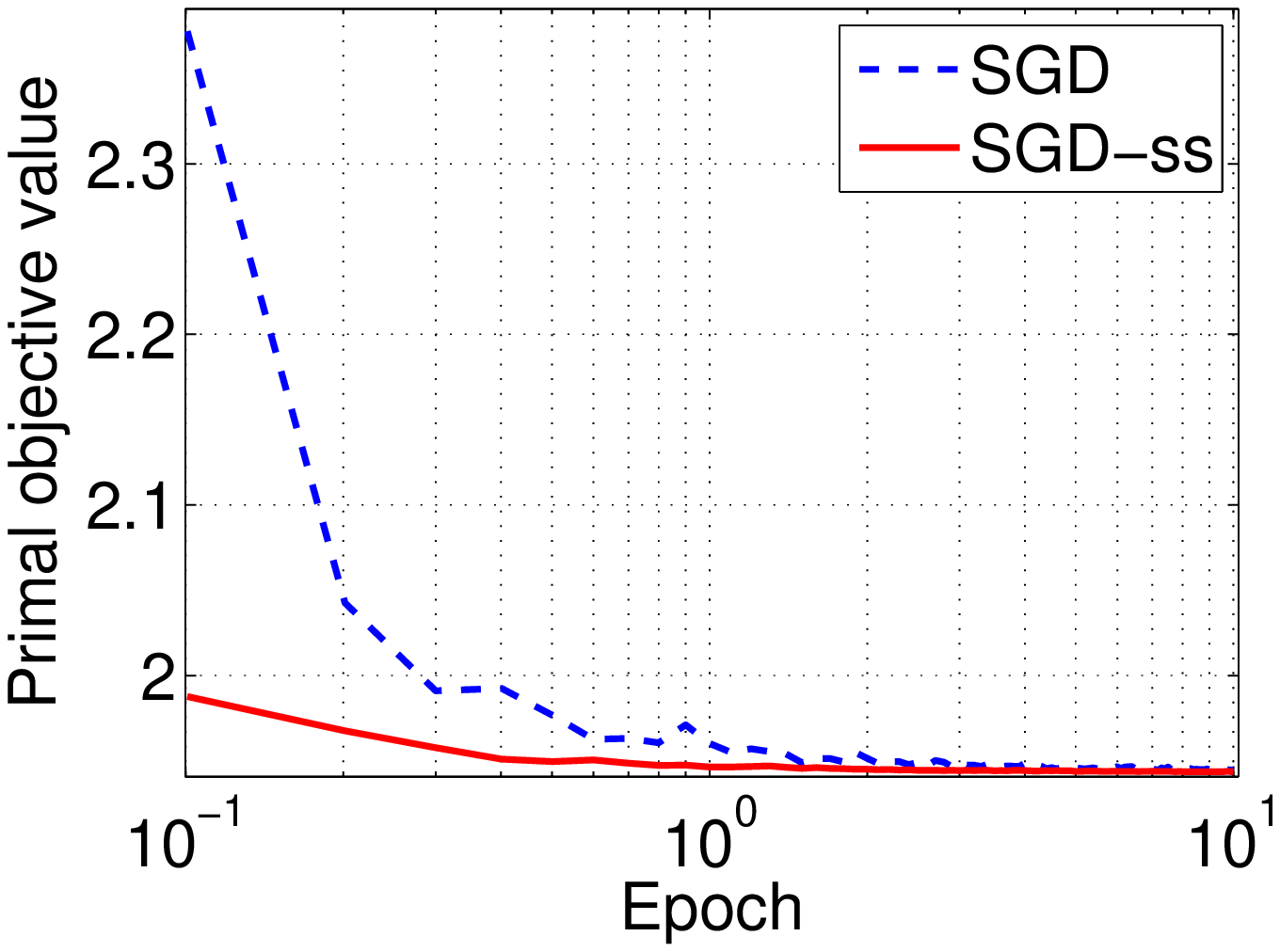}
\includegraphics[width=1.55in]{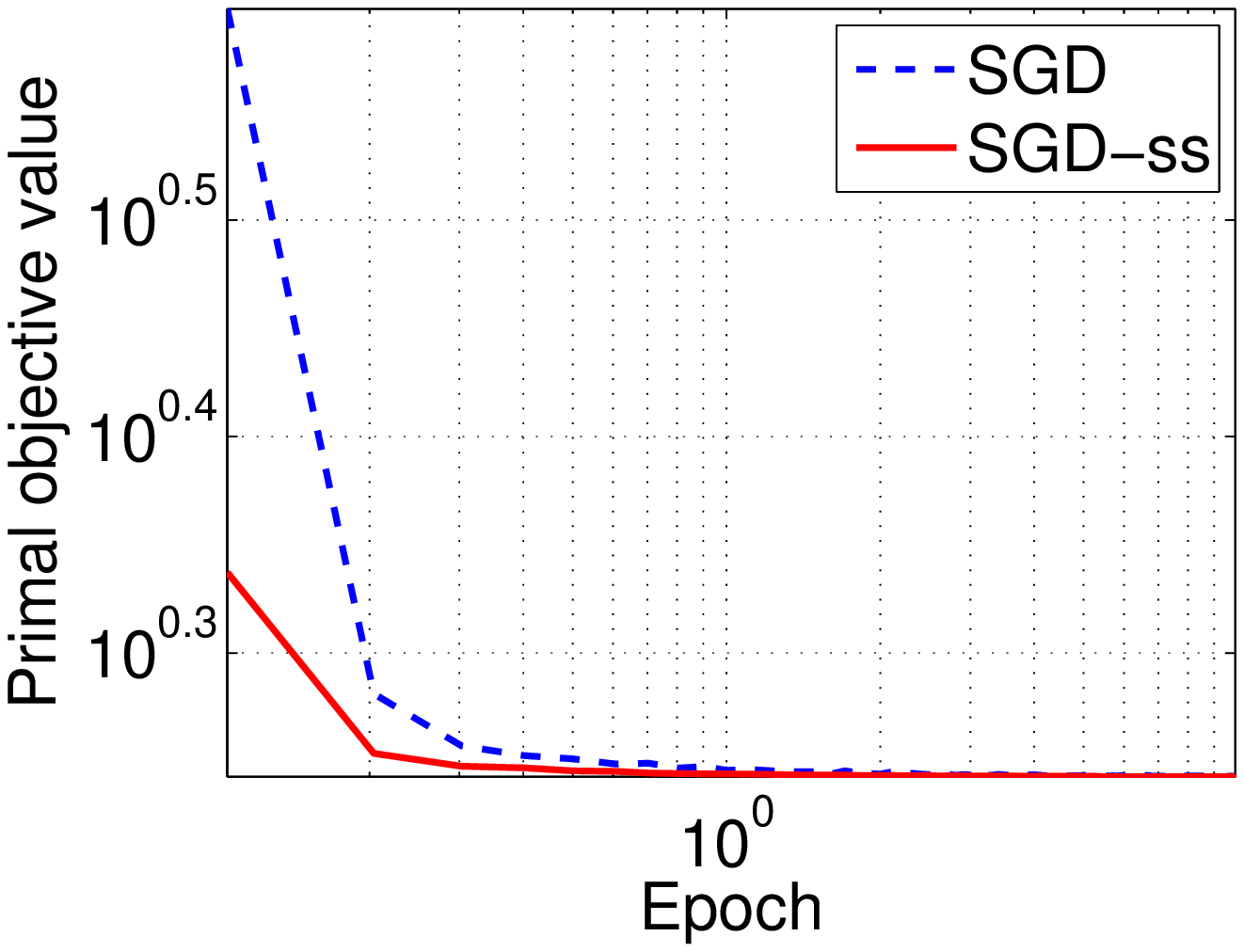}
{\scriptsize \makebox[1.55in]{(a)~letter }~\makebox[1.55in]{(b)~mnist }~\makebox[1.55in]{(c)~pendigits }~\makebox[1.55in]{(d)~ usps}}
\end{center}\vspace{-0.15in}
\begin{center}
\includegraphics[width=1.55in]{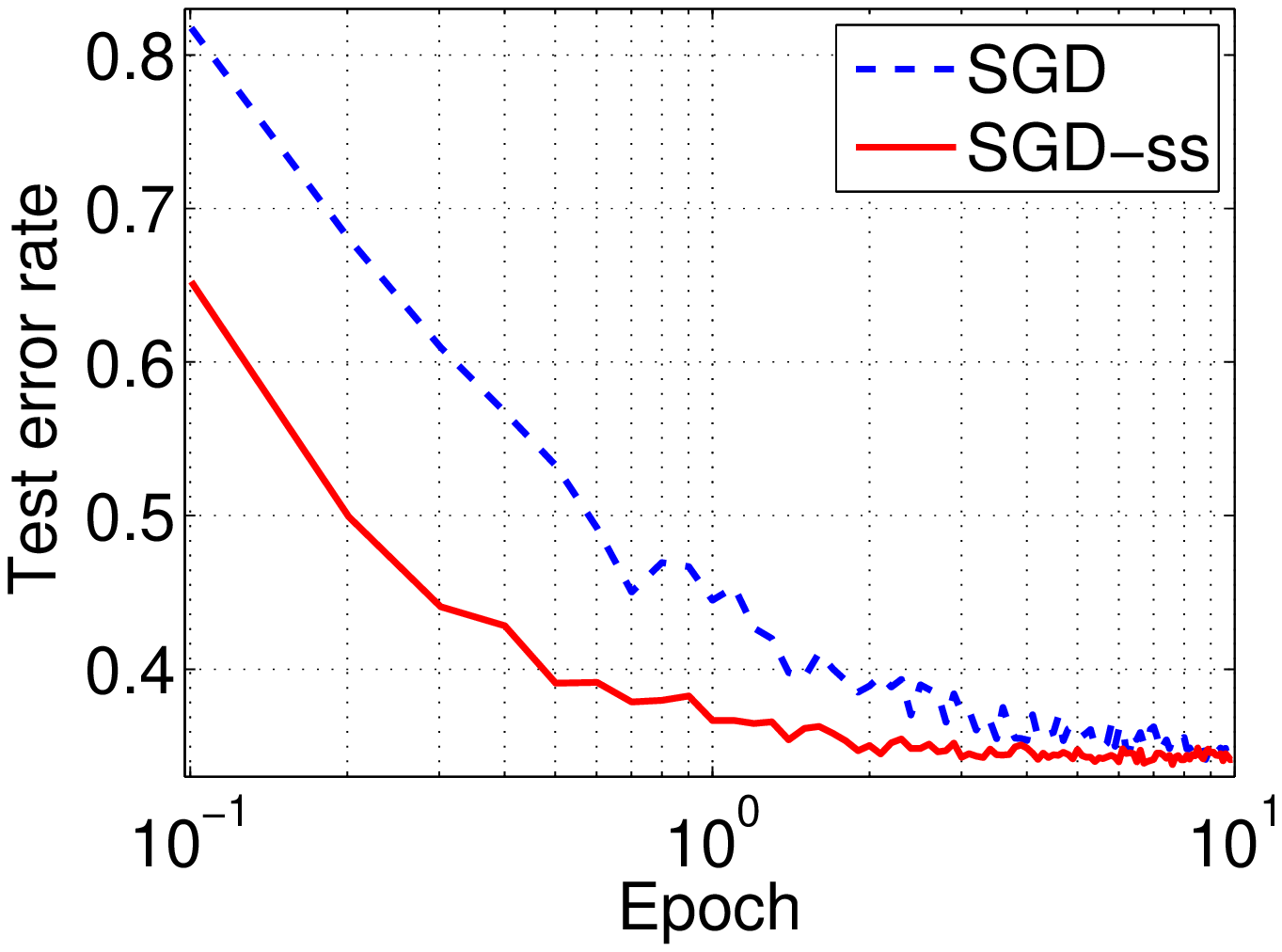}
\includegraphics[width=1.55in]{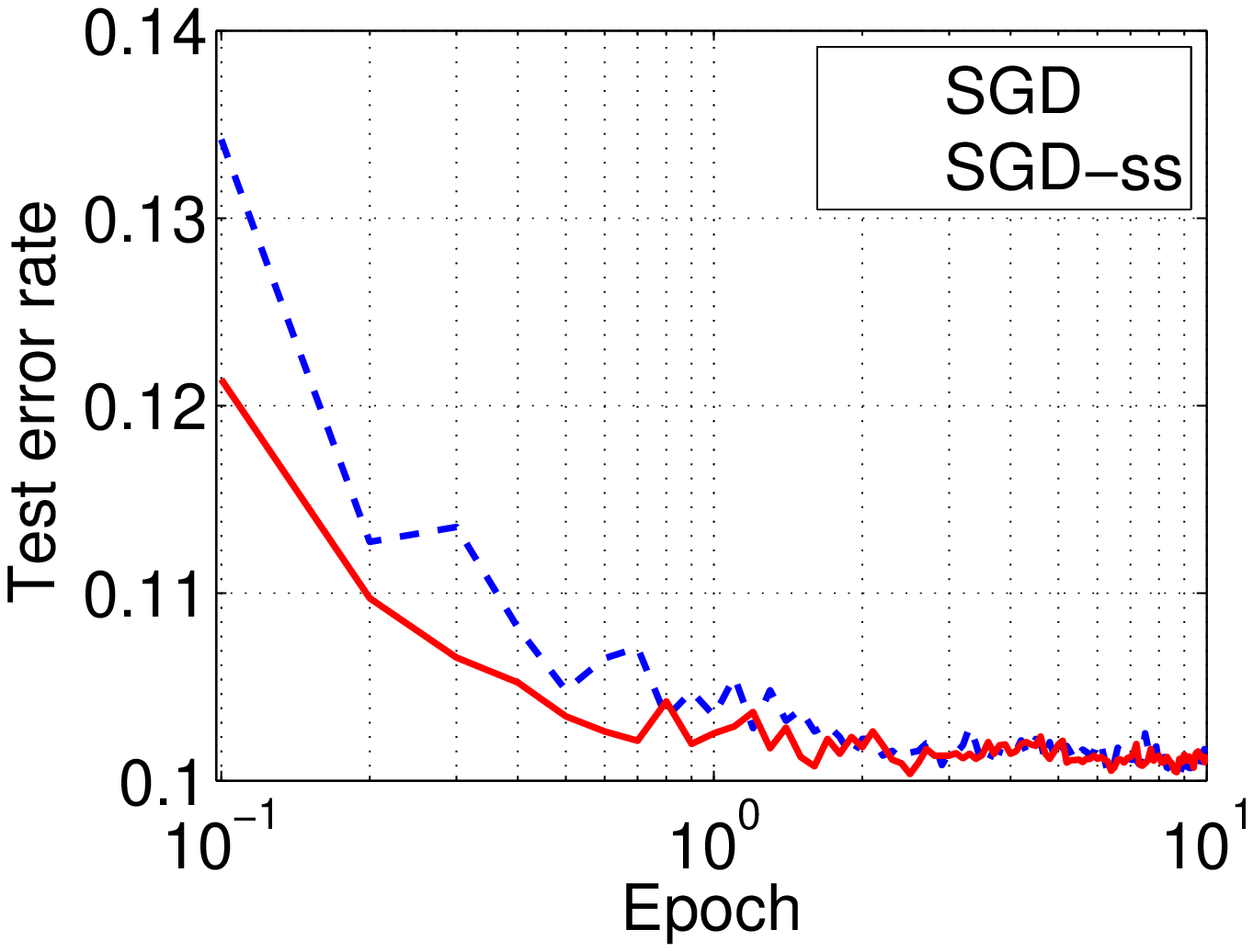}
\includegraphics[width=1.55in]{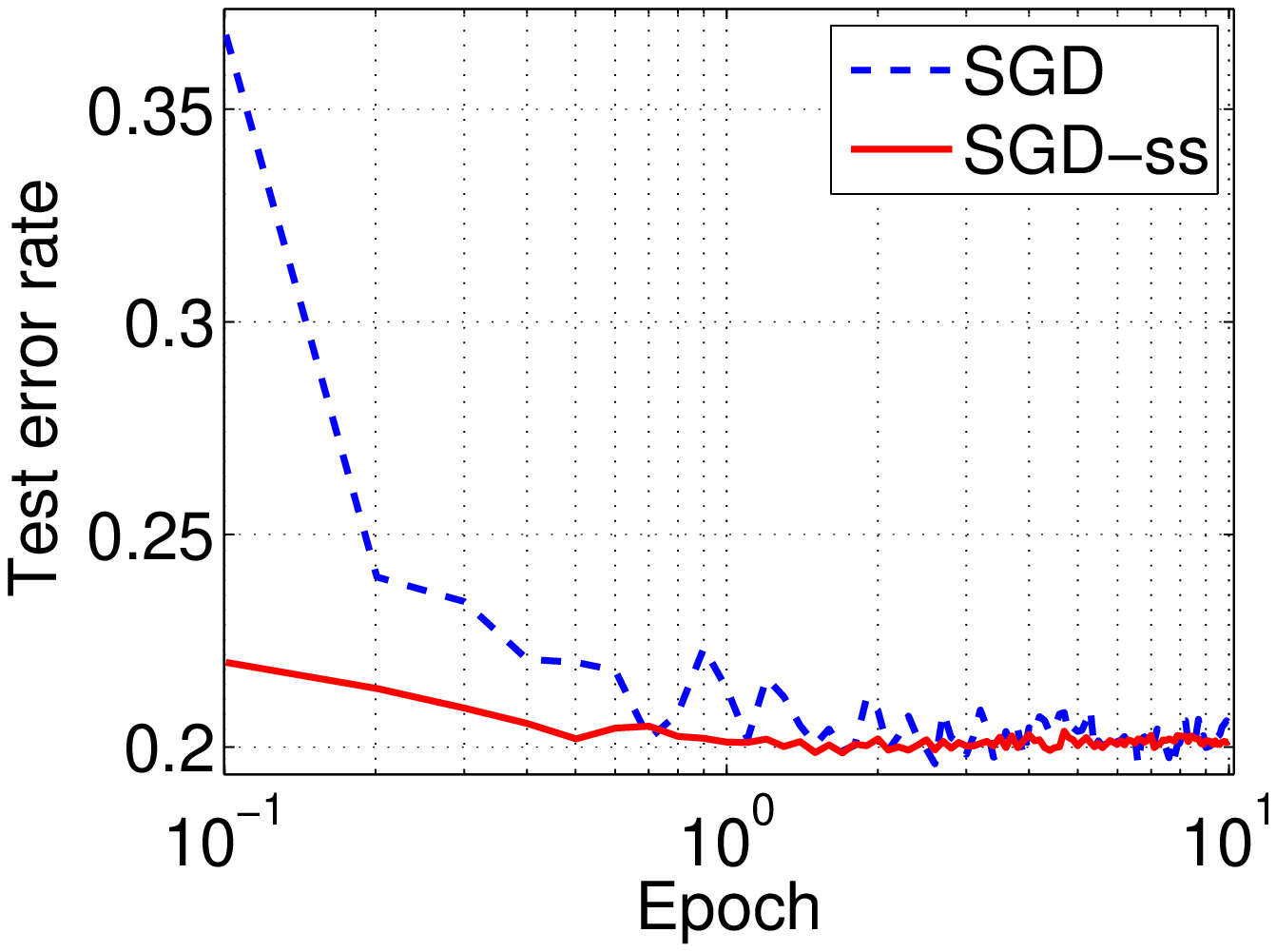}
\includegraphics[width=1.55in]{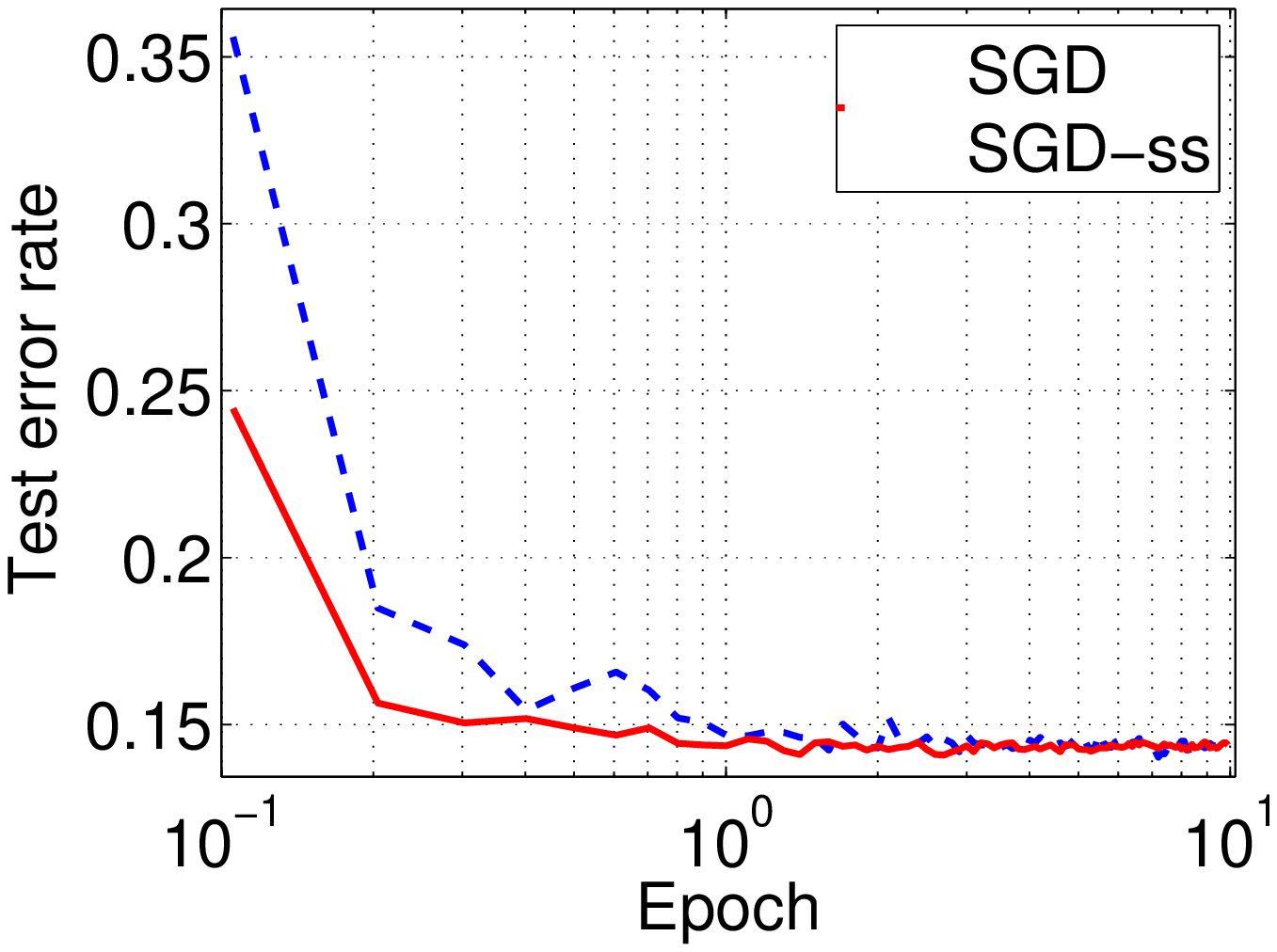}
{\scriptsize \makebox[1.55in]{(e)~letter }~\makebox[1.55in]{(f)~mnist }~\makebox[1.55in]{(g)~pendigits }~\makebox[1.55in]{(h)~ usps}}
\end{center}\vspace{-0.15in}
\begin{center}
\includegraphics[width=1.55in]{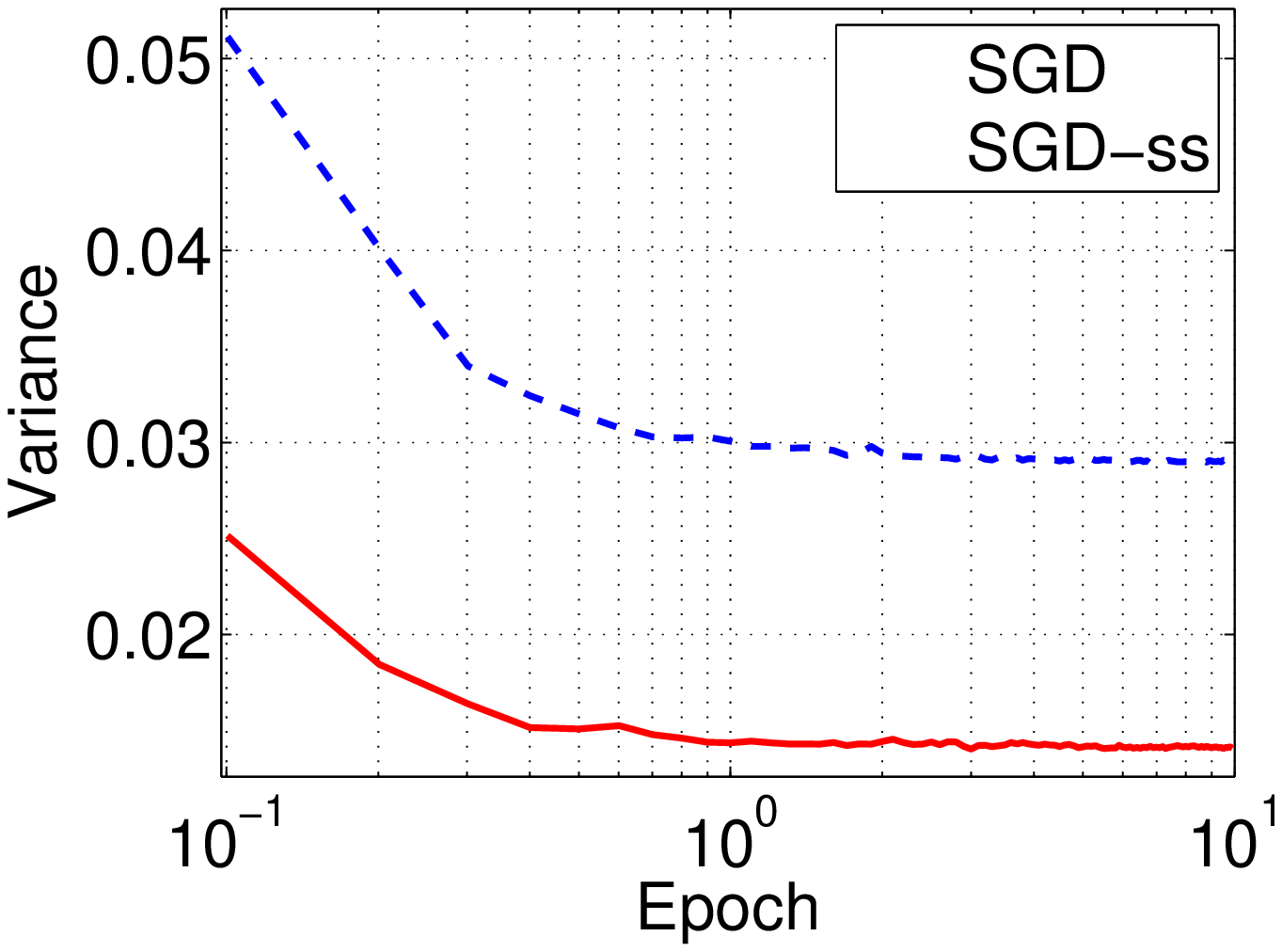}
\includegraphics[width=1.55in]{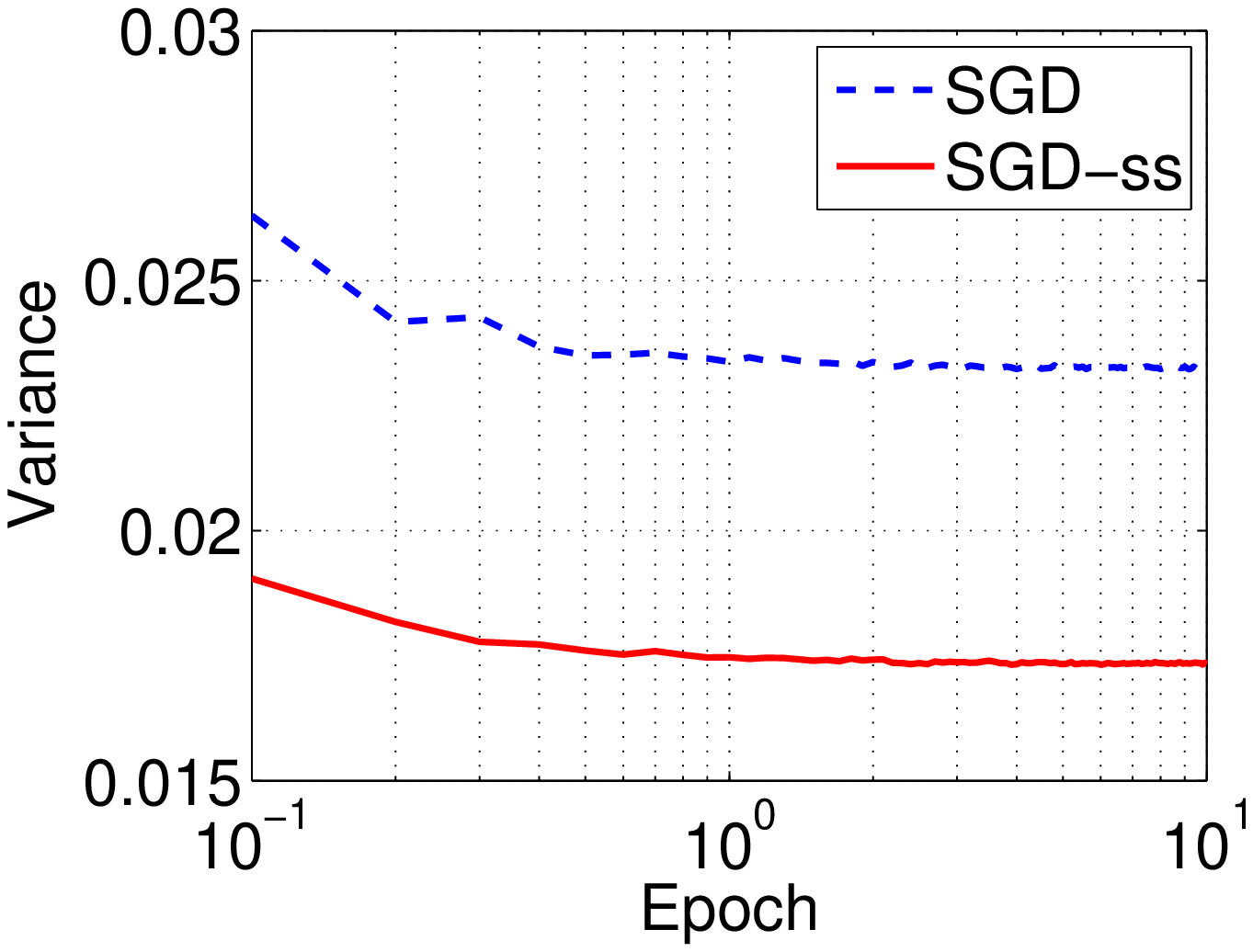}
\includegraphics[width=1.55in]{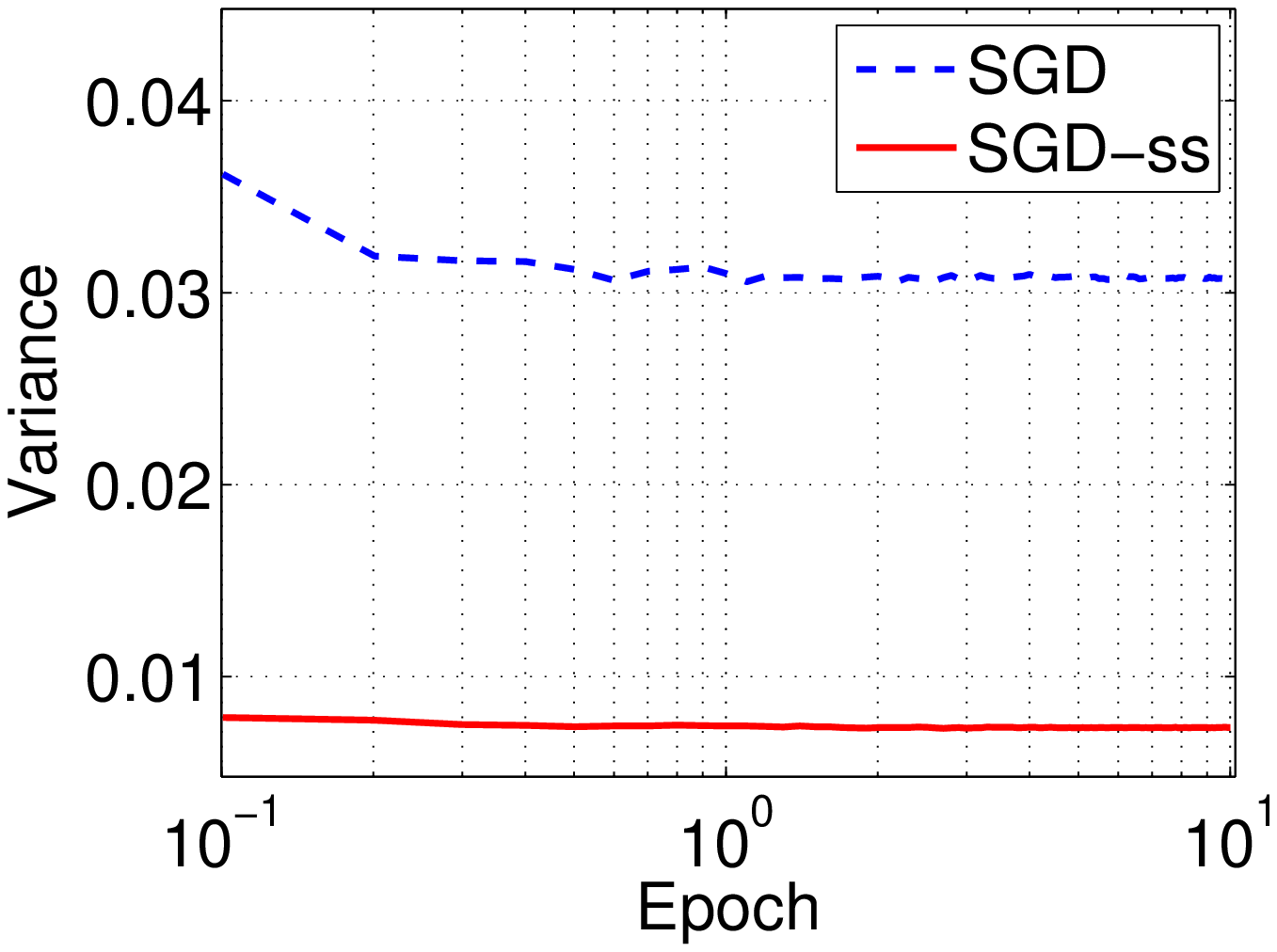}
\includegraphics[width=1.55in]{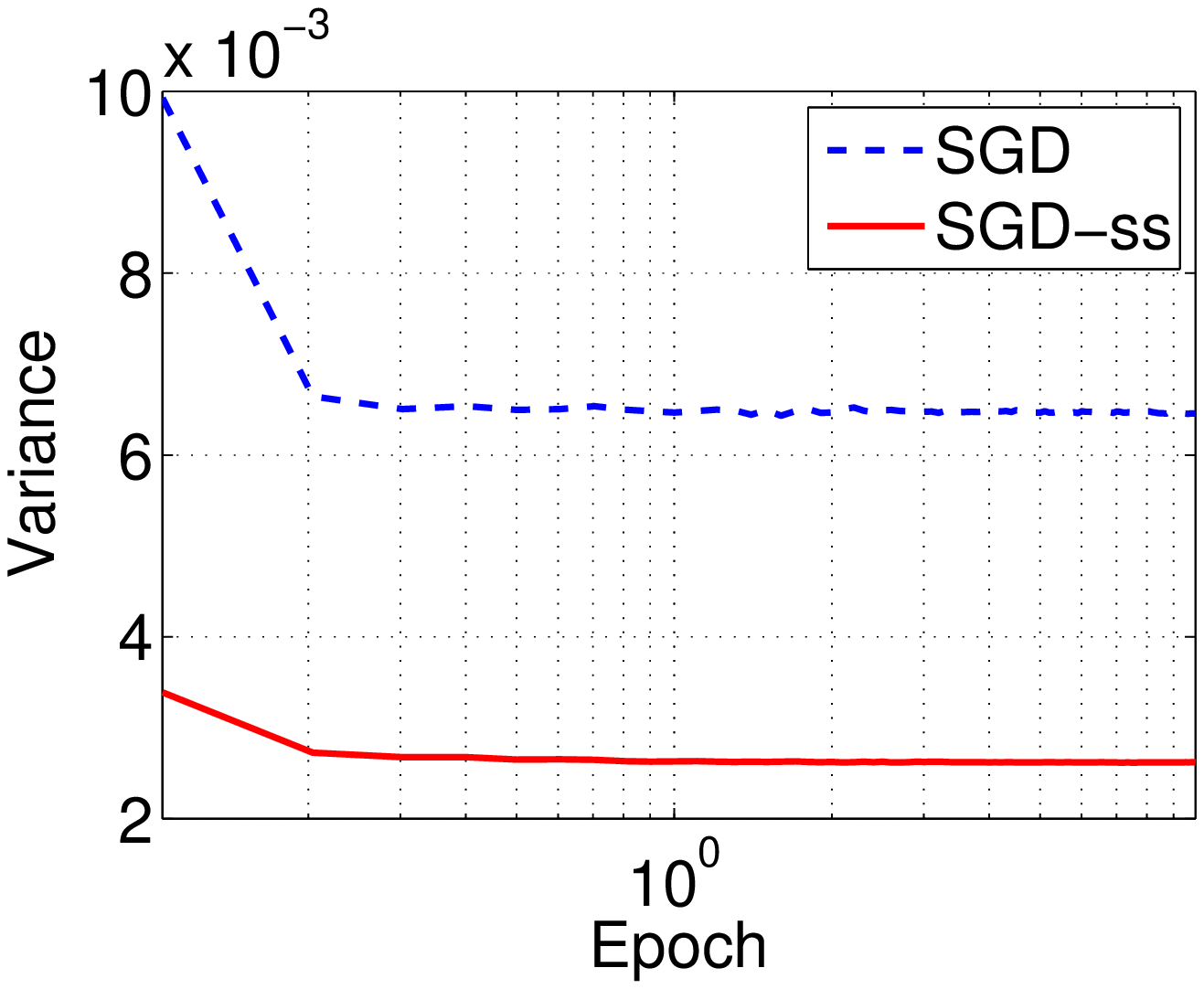}
{\scriptsize \makebox[1.55in]{(i)~letter }~\makebox[1.55in]{(j)~mnist }~\makebox[1.55in]{(k)~pendigits }~\makebox[1.55in]{(l)~ usps}}
\end{center}\vspace{-0.15in}
\caption{More results on multiclass logistic regression (convex) on {\bf letter}, {\bf mnist}, {\bf pendigits}, and {\bf usps}. Epoch for the horizontal axis
is the number of iterations times the minibatch size divided by the training data size. The first row summarized the primal objective value of the algorithms on these four datasets. The second row summarized the test error rates of the algorithms on these four datasets. The final row summarized the variances of the stochastic gradient estimators of the algorithms on these four datasets.}
\label{fig:more}
\end{figure*}
Figure~\ref{fig:more} shows more L2-regularized logistic regression results in terms of primal objective value, test error rate and variance of the stochastic gradients. Overall, SGD-ss  is clearly superior to SGD with uniform sampling, which demonstrates the effectiveness of the proposed sampling strategy.

\section{Conclusion}
\label{sec:conclusion}
This paper studies stratified sampling to reduce the variance for Stochastic Gradient Descent method. We not only provided a dynamic stratified sampling strategy but also provided a fixed stratified sampling strategy. We showed that the convergence rate for the training process can be significantly reduced by the proposed strategy. We have also conducted an extensive set of experiments to compare the proposed method with the traditional uniform sampling strategy. Promising empirical results validated the effectiveness of our technique.
\newpage
\bibliography{reference}

\begin{thebibliography}{10}

\bibitem{borwein2006convex}
Jonathan Borwein and Adrian Lewis.
\newblock {\em Convex analysis and nonlinear optimization: theory and
  examples}, volume~3.
\newblock Springer, 2006.

\bibitem{duchi2009efficient}
John Duchi and Yoram Singer.
\newblock Efficient online and batch learning using forward backward splitting.
\newblock {\em The Journal of Machine Learning Research}, 10:2899--2934, 2009.

\bibitem{johnson2013accelerating}
Rie Johnson and Tong Zhang.
\newblock Accelerating stochastic gradient descent using predictive variance
  reduction.
\newblock In {\em Advances in Neural Information Processing Systems}, pages
  315--323, 2013.

\bibitem{kushner2003stochastic}
Harold~J Kushner and George Yin.
\newblock {\em Stochastic approximation and recursive algorithms and
  applications}, volume~35.
\newblock Springer, 2003.

\bibitem{DBLP:conf/nips/MahdaviYJZY12}
Mehrdad Mahdavi, Tianbao Yang, Rong Jin, Shenghuo Zhu, and Jinfeng Yi.
\newblock Stochastic gradient descent with only one projection.
\newblock In {\em NIPS}, pages 503--511, 2012.

\bibitem{rakhlin2011making}
Alexander Rakhlin, Ohad Shamir, and Karthik Sridharan.
\newblock Making gradient descent optimal for strongly convex stochastic
  optimization.
\newblock {\em arXiv preprint arXiv:1109.5647}, 2011.

\bibitem{DBLP:conf/icml/Shalev-ShwartzSS07}
Shai Shalev-Shwartz, Yoram Singer, and Nathan Srebro.
\newblock Pegasos: Primal estimated sub-gradient solver for svm.
\newblock In {\em ICML}, pages 807--814, 2007.

\bibitem{DBLP:journals/mp/Shalev-ShwartzSSC11}
Shai Shalev-Shwartz, Yoram Singer, Nathan Srebro, and Andrew Cotter.
\newblock Pegasos: primal estimated sub-gradient solver for svm.
\newblock {\em Math. Program.}, 127(1):3--30, 2011.

\bibitem{shamir2013stochastic}
Ohad Shamir and Tong Zhang.
\newblock Stochastic gradient descent for non-smooth optimization: Convergence
  results and optimal averaging schemes.
\newblock In {\em Proceedings of the 30th International Conference on Machine
  Learning (ICML-13)}, pages 71--79, 2013.

\bibitem{DBLP:conf/nips/WangCSX13}
Chong Wang, Xi~Chen, Alex~J. Smola, and Eric~P. Xing.
\newblock Variance reduction for stochastic gradient optimization.
\newblock In {\em NIPS}, pages 181--189, 2013.

\bibitem{DBLP:conf/icml/Zhang04}
Tong Zhang.
\newblock Solving large scale linear prediction problems using stochastic
  gradient descent algorithms.
\newblock In {\em ICML}, 2004.

\bibitem{DBLP:journals/corr/ZhaoZ14}
Peilin Zhao and Tong Zhang.
\newblock Stochastic optimization with importance sampling.
\newblock {\em CoRR}, abs/1401.2753, 2014.

\end{thebibliography}
\bibliographystyle{plain}

\end{document}